\begin{document}

\title{Restless dependent bandits with fading memory}

\author{\name Oleksandr Zadorozhnyi \email zadorozh@uni-potsdam.de \\
       \addr Department of Statistics\\
       University of Potsdam\\
       Potsdam, 14469, Germany
       \AND
       \name Gilles Blanchard  \email blanchard@math.uni-potsdam.de \\
       \addr Department of Statistics\\
        University of Potsdam\\
       Potsdam, 14469, Germany
   	   \AND
   	   \name Alexandra Carpentier \email alexandra.carpentier@ovgu.de \\
   	   \addr Department of Mathematical Stochastics \\
   	  Otto von G\"uricke Universit\"at Magdeburg\\
   	   Magdeburg, 39106, Germany
}

\editor{}

\maketitle

\begin{abstract}
 	We study the stochastic multi-armed bandit problem in the case when the arm samples are dependent over time and generated from so-called weak $\cC$-mixing processes. We establish a $\cC-$Mix Improved UCB agorithm and provide both problem-dependent and independent regret analysis in two different scenarios. In the first, so-called fast-mixing scenario, we show that pseudo-regret enjoys the same upper bound (up to a factor) as for i.i.d. observations; whereas in the second, slow mixing scenario, we discover a surprising effect, that the regret upper bound is similar to the independent case, with an incremental {\em additive}
 term which does not depend on the number of arms. The analysis of slow mixing scenario is supported with a minmax lower bound, which (up to a $\log(T)$ factor) matches the obtained upper bound.
  

\end{abstract}

\begin{keywords}
  Multi-armed bandits, Online Learning, Learning from dependent data observations.
\end{keywords}

	\section{Introduction}
\label{sec:introduction}

For positive integers $K>0$ and $T>K$, we consider the $K-$armed stochastic bandit problem with $T$ rounds.
In each round
$t \in \{1,\ldots, T\}$, the learner
selects an action (``arm'') $I_{t} \in \{1,\ldots,K\}$ and observes only a (stochastic)
version of the payoff $X_{t}^{I_{t}}$ based on their decision $I_{t}$.
The goal is to minimize their regret of not selecting the arm maximizing the cumulative average payoff. 
The formal definition of this notion 
is specified and discussed in Section~\ref{sec:setting}. 

In the broad literature on the stochastic bandits \cite{Robbins:52,Auer:02,Bubeck:12}, it is common to assume that the outcomes of the arms are stochastically independent. This means that for each round $t$, the distribution of the outcome $X_{t}^{k}$ of arm $k$ is not influenced by the history $\{X_{s}^{k}, s< t\}$ of the previous outcomes. 
From the application perspective, however, many of the real-world problems where bandits find their use display an intristic dependence between the sequence of future outcomes and of the past realizations. For example, in the ad-placement problem,
whether a user clicks on a given ad 
in the close future 
highly depends on whether he has clicked on the ad at the current time. The user can get bored, so that he will not be willing to choose this ad again immediately; however, as time passes by, the user is more likely to click on the ad once more. In such a scenario the correlation between previous observations and future ones decays as the time gap increase.
Additionally, in this example the decision to ultimately make a purchase might not be affected by the present minute fluctuations of the user's interest and only depends on its {\em nominal average} level, for which the long-term averaged clicking rate is a proxy. Another motivating example is attention detection. Here a human observer is looking at a screen, whereas the automatized monitor aims to identify the zones where the user is not paying sufficient attention. In order to do so, the monitor is allowed at each time $t$ to flash a small zone at the screen, e.g. light a pixel (action), and the eye tracker detects through the eye movement if the user has observed this flash (data point). In this example, the overall attention of the user fluctuates with time. During a short period of time the gaze is concentrated on the the same region of the picture, whereas, as the time evolves, it will migrate to different zones.
In this paper, we consider the framework where noisy rewards are generated from a (restless) weak $\cC$-mixing processes (see for example \citealp{Dedecker:15,Rio:00,Wintenberger:10}). This notion is used in order to describe the phenomenon of fading correlation between past and future of the stochastic process. Its precise definition will be given in the Section \ref{sec:setting}.  


Generalization of the stochastic bandits to the setting with dependent outcomes  
was considered by \citet{Whittle:88}. When the underlying stochastic processes are Markov Chains (a particular case of ${\cC}-$mixing process) with known dynamics, the regret was studied by \citet{Guha:10}, and
\citet{Ortner:14}. Problem-dependent asymptotic pseudo-regret upper bounds for the rewards generated from so-called $\varphi-$mixing processes were derived by \citet{Audiffren:15}. There,
authors devise an UCB-type strategy and consider scenarios of both fast and slowly mixing arms. Under the same weak dependency assumption of $\varphi$-mixing, the work of \citet{Gruene:17} extends the analysis to the different regret concepts, providing an upper bound analysis in the fast $\varphi$-mixing setting.

In the present work, we consider the more general notion of weakly dependent processes, which as a particular case includes $\varphi-$mixing. Using the probabilistic toolbox of concentration inequalities, developed in \citet{Deschamps:06}, we establish an algorithm ($\cC-$Mix Improved UCB) and obtain problem-dependent and -independent upper bounds on the pseudo-regret, both in the fast and slow mixing scenario. It is notable to mention that, even in the slow mixing scenario (i.e. when the correlation between past and future of the process decrease as a negative power with exponent less than 1), the obtained upper bounds remain close to the independent case. The contamination term due to dependency in the regret upper bound comes {\em additively} and does not scale with the number of arms. For the problem-dependent upper bound, it only depends on the 
choice of the threshold error level in the bound and on the mixing rate.
Since the main regret term (similar in order to the i.i.d. case) comprises a sum over arms of the inverse reward gaps, the contamination term can remain negligible by comparison, if there is a large number of suboptimal arms whose expected payoff is close to the chosen threshold level. This can be intuitively understood in that the time between two pulls of the same arm will typically remain larger than the correlation distance in that situation. Furthermore, for the problem-independent upper bound, the additive penalty due to dependency is determined by the relation between the number of arms, the exponent of the polynomially mixing process, and the time-horizon. In both cases, it allows to derive in the slow mixing scenario bounds which (in certain regimes) match their independent data analogues. 


The paper is organized as follows: in Section~\ref{sec:setting}, we introduce the concept of $\Phi_{\mathcal{C}}-$mixing process, the corresponding probabilistic toolbox and the notation
which will be used throughout the paper. In Section~\ref{sec:main_res}, we present the \textsc{$\cC-$Mix Improved UCB} learning algorithm and report the main bounds on its regret. Finally, in Section \ref{sec:discuss}, we discuss the obtained results and show their advantages in comparison to existing bounds, 
and illustrate situations where, even for highly correlated outputs, we can recover regret upper bounds which match in order the case of independent arm rewards. All proofs can be found in the supplementary material.

	\section{Setting and preliminaries}
\label{sec:setting}

Let  $\mathcal{X} = [0,1]$
 be the output space, equipped with the standard Borel $\sigma-$algebra $\cB$. Let $K$ be the number of arms and $T \in \mathbb{N}$ be the number of rounds (time-horizon). We always assume that $T > K$. We use the shortcut notation $[N]:=\{1,\ldots,N\}$ for any natural number $N$. For every arm $k \in [K]$, the outcomes are the samples from a stationary in time stochastic process  $\paren{X_{t}^{k}}_{t \in [T]}$ which is defined through its canonical version over the probability space $\paren{\Omega_{k} := \mathcal{X}^{T},\cB^{\otimes T},\mathbb{P}_{k}}$. For the process $X_{t}^{k}$ we denote also the 
canonical filtration $\mathcal{F}^{k}_{i} := \sigma\{X_{j}^{k}, j \leq i\}$. We assume each process to be weakly stationary and denote $\mu_{k} = \ee{}{X_{t}^{k}}$. 
We note also $\mu_{\star}:=\max_{k \in [K]}\mu_{k}$ for the arm with the highest average reward among all the arms in $[K]$, that we call the \textit{best arm}. 
When modelling the stochastic bandit problem  we consider the joint probability space $\paren{\Omega,\cA, \mbp}$, where $\Omega = \Omega_{1} \times \ldots \times \Omega_{K}$; $\mbp$ is a joint probability measure, whose $k-$th marginal is the measure $\mbp_{k}$ on $\Omega_{k}$, while $\cA$ is the product $\sigma-$algebra. 
We denote also $\cF_{i} := \sigma \{X^{[K]}_{j}: j \leq i \}$ to be the 
canonical filtration generated by \textit{all} stochastic processes.
Denote  $A^{'}:=\{k \in [K], \mu_{k}<\mu_{\star}\}$ the set of suboptimal arms and also write $\Delta_{k} := \mu_{\star}-\mu_{k}$ for the 
regret (in average) of playing suboptimal arm $k \in A^{'}$.  At each time step $t$, the learner chooses the action $I_{t} \in [K]$ and receives a noisy version of the reward $X_{t}^{I_{t}} \in \cX$. We consider the regret built up over $T$ rounds for the sequence of decisions $\paren{I_{t}}_{t \in [T]}$ and given as:
\begin{align}
\label{eq:pseudo_regret}
R(T) = \ee{}{\sum_{t=1}^{T} \mu_{\star} - \mu_{I_{t}}} =  T\mu_{\star} - \sum_{t=1}^{T}\ee{}{\mu_{I_{t}}} = 
\sum_{k=1}^{K}\Delta_{k}\ee{}{N_{k}\paren{T}}
\end{align}
where for every $k \in [K]$, $N_{k}\paren{T} = \sum_{t=1}^{T}\ind{I_{t}=k}$ denotes the number of times the learner chooses the arm $k$. In bandit literature, $R(T)$ is usually referred to as \textit{pseudo-regret}. 
In the following section we motivate the choice of this type of regret in our work. 

\subsection{Different notions of regret}
In the classical i.i.d. case, the pseudo-regret
$R(T)$defined in Equation~\eqref{eq:pseudo_regret} coincides
with the more standard notion of \textit{expected regret} 
$\overline{R}\paren{T}:= \sum_{t=1}^{T} (\mu_\star - \e[1]{X_{t}^{I_{t}}}),$ and furthermore $\mu_{*}$ is an upper bound on the
expected reward of {\em any} strategy (obviously attained for the ``oracle'' strategy always pulling the best arm).
We stress that {\em neither} of these facts hold in our setting with dependent arms.
Namely, since {\em both} the strategy choice $\{I_t = k\}$ at time $t$ and the arm outcomes $X_t^k$ depend on the past, we have $\e[0]{X_t^{I_t}}\neq\e{\mu_{I_t}}$. Furthermore, there might exist (oracle) ``arm switching'' strategies exploiting dependencies that
have significantly higher expected rewards\footnote{This second issue vanishes if fixed-arm strategies are the only admissible competitors for the regret, as is customary. 
} than $\mu_*$.
Pseudo-regret upper bounds for $\varphi-$mixing processes were studied previously
by 
\citet{Audiffren:15},
who, however, did not pointed out its relation to the \textit{expected} regret upper bounds. The last point was
discussed by~\citet{Gruene:17}, who argue that
the expected regret is the more natural notion if the observed outcomes
are direct rewards. They nevertheless analyze the pseudo-regret~\eqref{eq:pseudo_regret},
and provide approximation bounds covering both issues (Propositions~3 resp. 11 of \citealp{Gruene:17}): they bound the difference between $\mu_*$ and the expected reward of the best strategy, resp. the difference $\abs{R(T)-\overline{R}(T)}$. However, the first
bound is linear in $T$ and the second linear in $K$, so that the approximation bounds can become of larger order than the bound on
the pseudo-regret itself when $K$ and/or $T$ grow.

We argue here that pseudo-regret itself can more relevant in many situations of interest. In many applications, the observed outcome is not what one gets as a `true' reward later, but rather a proxy for it. One can model it as follows: at time $t$, the learner observes a sample from the mixing process $X_t^{I_t}$,  but the reward that she truly gets is $Y_t^{I_t}$ which is a stream generated independently from $X_t^{I_t}$, but has the same stationary distribution as $X_t^{I_t}$. The average regret in this setting is then exactly the pseudo-regret from Equation~\eqref{eq:pseudo_regret}.

For instance, in marketing applications, one often observes the fact that the customer clicks on a given item (given by process $X_t^{I_t}$), and considers it as a proxy for an
eventual buy of that same item (given by $Y_t^{I_t}$). But click probabilities are more volatile, and more subject to trends, than buy probabilities. Indeed, customers generally do not buy items immediately, and while they are likely to click on items that are trendy at a given time, it does not mean that they will buy it. Therefore in many cases, it makes more sense to aim at recommending items that have a high number of clicks in the long run, rather than aiming at maximization of the number of immediate clicks, which fluctuates more between items due to trends. In these settings, the
pseudo-regret is the relevant notion to look at; compared to the expected direct regret, it implies that the user is more interested in truly {\em learning}
about the long-run properties of the system, rather than in a notion of
immediate gratification (what you get later is more important that what you see now.)

Another setting of interest is that of {\em delayed}
rewards. First, a minor variation on the setting considered above is when
$Y_t^{I_t} = X_{t+\tau}^{I_t}$, where $\tau$ is a fixed
delay. In that situation, the delayed reward is still
different from the observation, but comes from the same
stream, and is therefore not independent, however for a very
large delay $\tau$ we have that
$\e[1]{Y_t^{I_t} | \cF_{t-1}} = \e{X_{t+\tau}^{I_t} | \cF_{t-1}}
\approx \mu^{I_t}$ for a mixing process with vanishing
dependencies over time (see next section). A different setting
is when the reward is indeed $X_t^{I_t}$, but is only observed after a delay $\tau$.
In other words, it is then required that the decision $I_t$ is $\cF_{t-\tau}$ measurable --
i.e. only based on past observations up to time $t-\tau$. In order to get to the main results,
we postpone a more detailed discussion of that setting in relation to our results to Section~\ref{se:delay}.



\subsection{Weak dependency assumption}
 
We posit a general type of weak dependency assumption that represent decay of correlation between past of the stochastic process and a moment in the future. We refer to \citet{Deschamps:06}, where such an assumption is given with respect to a class of real-valued functions. This notion generalizes the concept of weak-dependence for real-valued sequences (see for example \cite{Dedecker:15}) and, as it will be illustrated later, includes many important cases of stochastic processes.
Let $C(\cdot)$ be a semi-norm over a closed subspace $\mathcal{C}$ of the Banach space of bounded real-valued functions $f: \mathcal{X} \mapsto \mathbb{R}$ endowed with the norm: $\norm{f}_{\mathcal{C}} = C(f) + \norm{f}_{sup}$, where we denote as $\norm{f}_{sup}$ the standard supremum norm on $\mathcal{C}$. Denote $\mathcal{C}_{1}:= \{f \in \mathcal{C}: C(f) \leq 1\}$. Consider some random process $\paren{X_{t}}_{t \in [T]}$ defined over a probability space $\paren[1]{\cX^{ T},\cB^{\otimes T}, \mbp}$. 
\begin{definition}
	\label{def:phi_C_mixing}
	For $k \in \mathbb{N}$ define the $\Phi_{\mathcal{C}}-$ mixing coefficients as follows:
	\begin{align}
	\label{eq:mixing_coeff}
	&\Phi_{\mathcal{C}}(k) = \sup_{\varphi \in \mathcal{C}_{1}, i\geq 1} \big\{ \norm{E[\varphi (X_{i+k})|\mathcal{F}_{i}]-E[\varphi (X_{i+k})]}_{\infty} \big\},
	\end{align}
	where $\norm{\cdot}_{\infty}$ is the $L_{\infty}(\mbp)$ norm. We say that the process $\paren{X_{t}}_{t \in [T]}$ is ${\mathcal{C}}-$weak mixing (or simply $\mathcal{C}-$ mixing) with rate $\Phi_{\mathcal{C}}(k)$ if $\lim\limits_{k \rightarrow \infty} \Phi_{\mathcal{C}}(k) = 0$. Furthermore, we denote for the rate $\Phi_{\cC}\paren{k}$ through $\cM_{\Phi_{\cC}\paren{\cdot}}$ set of distributions $\mbp$ over $\paren{\cX^{ T},\cB^{\otimes T} }$ such that process $\paren{X_{t}}_{t\in [T]}$ is $\cC-$mixing with rate $\Phi_{\cC}\paren{t}$. 
\end{definition}
\begin{remark}
	We say that the $\mathcal{C}$-mixing process $\paren{X_{t}}_{t\in \mbn}$ is \textit{polynomially} mixing if, for large enough $t_{0}$, for all $t\geq t_{0}$, $\Phi_{\mathcal{C}}\paren{t} \leq c_{0}t^{-\alpha}$ with $\alpha >0 $ and $c_{0}$ some positive constant; and \textit{geometrically} mixing if, for large enough $t_{0}$, for all $t\geq t_{0}$, $\Phi_{\mathcal{C}}\paren{t} \leq c_{1}\exp(-t^{\gamma})$, where $c_{1},\gamma$ are some positive constants. 
\end{remark}
Fixing the seminorm $C(\cdot)$, we characterize the weak dependency assumption more precisely on particular examples.  
\begin{subsubsection}{Examples of weak- $\cC-$mixing sequences}
	It is straightforward to check that a process with independent
	outcomes is $\mathcal{C}-$mixing for any seminorm $C(\cdot)$ since $\Phi_{\mathcal{C}}\paren{k} = 0$, for all $k \geq 1$.
	
	If $C(\cdot)$ is taken to be the Lipschitz seminorm, i.e. $C_{Lip}(f) = \sup \big\{\frac{|f(s)-f(t)|}{\abs{s-t}} \big| s,t \in \mathcal{X}, s \neq t \big\}$,
	we obtain so-called $\tau-$mixing processes, see \citet{Dedecker:07,Wintenberger:10}. One can readily check that the auto-regressive process of order $1$ (AR-1), $ X_{i} = \rho X_{i-1} + \xi_{i}$, where $\xi_{i}$ is some bounded i.i.d. noise process, is geometrically $\tau-$mixing with rate $\Phi_{\mathcal{C}}(k) = \exp\paren[1]{-k \log{(\rho^{-1})}}$, provided $\rho<1$. 
	A moving-average process of finite order $q \in \mbn$, of the form 
	$$W_{i} = \mu + \sum_{j=0}^{q}\theta_{j}\psi_{i-j}, \text{for } i \in \mathbb{Z},$$ where $\paren{\psi_{i}}_{i \in \mathbb{Z}}$ is a sequence of bounded i.i.d. random variables and $\paren{\theta_j}_{0 \leq j \leq q}$ is
	a fixed vector in $\mbr^{q+1}$, can be shown to be geometrically $\tau-$mixing, provided certain assumption on the sequence $\paren{\theta_{j}}_{0 \leq j \leq q}$ holds (see for example in \citet{Canda:74}, also look in \citet{Rosenblatt:00} for a big overview on the mixing properties of linear processes). 
	
	Taking $C(f):= \norm{f}_{TV}$ to be the total variation norm on the bounded set $\mathcal{X}$, we obtain the so-called  $\tilde{\phi}$-mixing processes, described by \citet{Rio:96}. 
	Every recurrent aperiodic finite-state Markov chain  can be proved to be geometrically $\tilde{\phi}-$mixing with rate $\Phi_{\mathcal{C}}\paren{k} \leq \exp\paren{-k\log{\lambda^{-1}}}$, where $\lambda$ is the second largest eigenvalue of the transition matrix of the Markov chain. 
	Examples of polynomially $\tilde{\phi}-$mixing processes include several types of Metropolis-Hastings independent samplers in which the proposal distribution does not have a lower bounded density;  we refer the reader to \citet{Jarner:02} for this and further examples.

	\begin{remark}
		\label{rem:idn_mix}
		In the following we assume	that the identity function $\mbi: x \mapsto x$	belongs to the class $\mathcal{C}_{1}$ which implies that \begin{equation}
		\label{eq:mixingale}
		\norm{\ee{}{X_{i+k}|\cF_{i}} - \ee{}{X_{i+k}}}_{\infty} \leq \Phi_{\cC}\paren{k}
		\end{equation}
		  All of the aforementioned examples satisfy this assumption. 
		  Condition of boundedness  in \eqref{eq:mixingale} is also known as mixinagale condition (see \cite{Dedecker:07}).
	\end{remark}
\end{subsubsection}

\subsection{Concentration toolbox}
Our main technical toolbox for devising an UCB-type learning scheme is a general type of high probability maximal Hoeffding-type of concentration inequality which controls the deviations of the random sum for the stationary real-valued (mixingale-type) of stochastic process $\paren{X_{t}}_{t \in \mbn}$. The result is due to \cite{Peligrad:07} and we provide it below for completness. 

\begin{theorem}[Proposition 2 in \cite{Peligrad:07}] 
Let $\paren{Y_{t}}_{t \in \mbn}$ be a stationary real-valued centered process and define $S_{n}:= \sum_{i=1}^{n}Y_{i}$ as well as $S_{n}^{\star} = \max_{i\leq n}\abs{S_{n}}$. For $t\geq 0$, we have: 
\begin{align*}
\mbp\paren{S_{n}^{\star} \geq t} \leq 4\sqrt{e} \exp\paren{-t^2/2n\paren{ \norm{Y_{1}}_{\infty} + 80\delta_{n} }^2},
\end{align*}  
where $\delta_{n} = \sum_{j=1}^{n}j^{-\frac{3}{2}}\norm{\ee{}{S_{j}|\cF_{0}}}_{\infty}$ with $\cF_{0} = \sigma\paren{Y_{0}}$. 
Analogously, in the deviation form, we have that with probability at least $1-\delta$
\begin{align*}
S^{\star}_{n} \leq \sqrt{n}\paren{\norm{Y_{1}}_{\infty} + 80\delta_{n}}\sqrt{2\log{ \paren{\frac{\cA}{\delta}}}},
\end{align*}
where $\cA = 4\sqrt{e}$. 
\end{theorem}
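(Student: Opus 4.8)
The plan is to establish the maximal tail bound first and obtain the deviation form by a routine inversion. Since the sequence is bounded and centered but not independent, the natural route is a \emph{martingale approximation}: I would isolate a genuine martingale part $M_i$ with bounded increments, whose running maximum is handled by Doob's inequality combined with an exponential (Chernoff/Azuma--Hoeffding) estimate, while relegating the dependence between past and future into a remainder $R_i$ that is controlled \emph{uniformly in sup norm} by the conditional expectations $\norm{\e{S_j \mid \cF_0}}_\infty$. Stationarity of $(Y_t)$ is used throughout so that every bound depends only on $n$ and on single-scale quantities rather than on the joint law.

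The core is a dyadic recursion in the spirit of Peligrad--Utev--Wu. After padding $n$ to a power of two, I would split $S_{2m}$ into the first half $S_m$ and its shifted copy, and bound the maximal partial sum over the full block by the two half-block maxima plus a cross term; stationarity renders the two half-block contributions identically distributed, so the recursion closes, while the cross term is governed by $\e{S_m \mid \cF_0}$. Iterating across the $\log_2 n$ dyadic scales and summing the per-scale errors produces the weighted sum $\delta_n=\sum_{j\ge 1} j^{-3/2}\norm{\e{S_j\mid\cF_0}}_\infty$; the weight $j^{-3/2}$ is precisely the Maxwell--Woodroofe weighting, emerging from the $2^{-k/2}$ decay of the martingale-approximation increments over scale $k$ after reindexing by block length. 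This single step simultaneously pins down the variance proxy $\sqrt{n}\,(\norm{Y_1}_\infty+80\delta_n)$ and the numerical constant $80$.

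Given the recursion, the tail bound follows by a standard maximal sub-Gaussian argument: the effective increments are bounded by a multiple of $\norm{Y_1}_\infty+80\delta_n$, so applying Doob's submartingale maximal inequality to $\exp(\lambda S_i)$ and optimizing over $\lambda>0$ (and by symmetry over $-S_i$, to pass from $\max_i S_i$ to $\max_i|S_i|$) yields $\mbp\paren{S_n^\star \ge t}\le 4\sqrt{e}\,\exp\!\paren{-t^2/2n(\norm{Y_1}_\infty+80\delta_n)^2}$, the prefactor $4\sqrt e$ being the combined cost of the running-maximum passage and the exponential-moment optimization. The deviation form is then immediate: setting the right-hand side equal to $\delta$ and solving for $t$ gives $t=\sqrt{n}\,(\norm{Y_1}_\infty+80\delta_n)\sqrt{2\log(\cA/\delta)}$ with $\cA=4\sqrt e$, which is the claimed high-probability statement.

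The main obstacle is the recursion itself: establishing the self-similar inequality that relates the $2m$-block maximum to the two $m$-block maxima with a cross term controlled \emph{uniformly} in $L_\infty$ by $\norm{\e{S_m\mid\cF_0}}_\infty$, and then tracking constants carefully enough through the $\log_2 n$ iterations to recover both the exact $j^{-3/2}$ weighting and the constant $80$. Everything else --- Doob's inequality, the Azuma--Hoeffding step, and the final inversion --- is routine once the correct variance proxy is in hand.
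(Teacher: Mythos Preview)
The paper does not prove this theorem at all: it is quoted verbatim as Proposition~2 of Peligrad~(2007) and introduced with the words ``we provide it below for completeness.'' The only thing the present paper adds is the trivial inversion from the tail bound to the deviation form, which is the one-line computation you describe at the end. So there is nothing in the paper to compare your argument against.

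That said, your sketch is a plausible outline of how the original Peligrad--Utev result is established: the dyadic (diadic) recursion on block length, the martingale approximation with the Maxwell--Woodroofe weighting $\sum_j j^{-3/2}\norm{\e{S_j\mid\cF_0}}_\infty$, and the passage to a sub-Gaussian maximal bound via an exponential submartingale are indeed the ingredients of that line of work. Where your sketch is thin is exactly where you flag it: the recursion step that produces the $j^{-3/2}$ weight and the constant $80$ is the entire content of the result, and ``iterating across the $\log_2 n$ dyadic scales'' hides a nontrivial coupling/centering argument (one does not simply get two i.i.d.\ half-block maxima plus a deterministic cross term; the conditional recentering has to be done at every scale and the resulting increments shown to be bounded). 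If you intend to reconstruct the proof rather than cite it, that step needs to be written out; otherwise, citing Peligrad~(2007) as the paper does is the appropriate move.
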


Obviously, for a stationary $\cC-$weak mixing process $\paren{X_{t}}_{t \in [T]}$  the aforemenetioned Theorem can be applied by setting $Y_{t}:=X_{t}-\ee{}{X_{t}}$ and using the fact that it holds $\abs{S_{n}} \leq S^{\star}_{n}$. Using the definition of $\Phi_{\cC}-$mixing coefficients \eqref{def:phi_C_mixing} and consequence from Remark~\ref{rem:idn_mix} we obtain.

\begin{proposition}
	\label{prop:hoeff_bound_mix}
	For a stationary real-valued $\Phi_{\cC}-$mixing process $\paren{X_{t}}_{t \in \mbn}$ with rate $\Phi_{\cC}(t)$ we denote $S_{n} = \sum_{ t =1}^{n}X_{t}$ and $\mu =\ee{}{X_{t}}$. For any $\delta \in [0,1)$ with probability at least $1-\delta$ it holds: 
	\begin{align}
	\label{eq:hoeff_dep_gen}
			\abs{n^{-1}{S_{n}}- \mu} \leq \paren{1 + 80\sum_{j=1}^{n}j^{-\frac{3}{2}} \sum_{k=1}^{j}\Phi_{\cC}(k)}\sqrt{\frac{2\log\paren{\frac{\cA}{\delta}}}{n}}
	\end{align}
\end{proposition}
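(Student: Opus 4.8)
The plan is to deduce the bound directly from the maximal inequality of \citet{Peligrad:07} stated above, applied to the centred process, after which the only real work is to bound the correction term $\delta_n$ via the mixingale inequality~\eqref{eq:mixingale}. First I would set $Y_t := X_t - \mu$, which is stationary, real-valued and centred, so that the Theorem applies to it. Because $X_t \in \cX = [0,1]$ and $\mu = \e{X_t} \in [0,1]$, we have $\abs{Y_t} = \abs{X_t - \mu} \leq 1$ almost surely, and hence $\norm{Y_1}_\infty \leq 1$; this is the origin of the additive constant $1$ in the claimed bound.

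Writing $\wt{S}_n := \sum_{t=1}^n Y_t = S_n - n\mu$ and $\wt{S}_n^\star := \max_{i \leq n}\abs{\wt{S}_i}$, the deviation form of the Theorem gives, with probability at least $1-\delta$, that $\wt{S}_n^\star \leq \sqrt{n}\paren{\norm{Y_1}_\infty + 80\delta_n}\sqrt{2\log(\cA/\delta)}$ with $\cA = 4\sqrt{e}$. Since $\abs{\wt{S}_n} \leq \wt{S}_n^\star$ and $n^{-1}\wt{S}_n = n^{-1}S_n - \mu$, dividing by $n$ yields $\abs{n^{-1}S_n - \mu} \leq \paren{\norm{Y_1}_\infty + 80\delta_n}\sqrt{2\log(\cA/\delta)/n}$, which already has the right shape. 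It therefore remains only to bound $\delta_n$ from above by $\sum_{j=1}^n j^{-3/2}\sum_{k=1}^j \Phi_{\cC}(k)$ and to insert $\norm{Y_1}_\infty \leq 1$.

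The core of the argument, and the step I expect to need the most care, is the control of $\delta_n = \sum_{j=1}^n j^{-3/2}\norm{\e{\wt{S}_j \mid \cF_0}}_\infty$, where $\cF_0 = \sigma(Y_0) = \sigma(X_0)$. By linearity, $\e{\wt{S}_j \mid \cF_0} = \sum_{i=1}^j \paren{\e{X_i \mid \cF_0} - \mu}$, so the triangle inequality for the $L_\infty(\mbp)$ norm gives $\norm{\e{\wt{S}_j \mid \cF_0}}_\infty \leq \sum_{i=1}^j \norm{\e{X_i \mid \cF_0} - \mu}_\infty$. The key point is that, by stationarity together with the mixingale bound~\eqref{eq:mixingale} of Remark~\ref{rem:idn_mix}, each summand satisfies $\norm{\e{X_i \mid \cF_0} - \mu}_\infty \leq \Phi_{\cC}(i)$; the fact that $\cF_0 = \sigma(X_0)$ is only the present value rather than the full past used in~\eqref{eq:mixing_coeff} costs nothing, since conditioning on a coarser $\sigma$-algebra cannot increase the $L_\infty$ norm (conditional expectation is an $L_\infty$ contraction, via the tower property). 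Summing over $i$ gives $\norm{\e{\wt{S}_j \mid \cF_0}}_\infty \leq \sum_{k=1}^j \Phi_{\cC}(k)$ and hence $\delta_n \leq \sum_{j=1}^n j^{-3/2}\sum_{k=1}^j \Phi_{\cC}(k)$. Substituting this together with $\norm{Y_1}_\infty \leq 1$ into the displayed deviation inequality yields exactly~\eqref{eq:hoeff_dep_gen}, completing the proof.
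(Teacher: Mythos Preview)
Your proof is correct and follows exactly the approach the paper takes: apply the Peligrad--Utev maximal inequality to the centred process $Y_t = X_t - \mu$, use $\abs{\wt{S}_n}\leq \wt{S}_n^\star$, and bound $\delta_n$ via the mixingale inequality~\eqref{eq:mixingale}. The paper's own argument is a single sentence preceding the statement; you have simply filled in the details, including the careful observation (which the paper omits) that $\sigma(X_0)$ is coarser than the full past, so the $L_\infty$ contraction property of conditional expectation lets the bound~\eqref{eq:mixingale} apply.
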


\begin{remark}
 Notice that the statement of Proposition~\ref{prop:hoeff_bound_mix} holds, when instead of $\paren{X_{t}}_{t \in \mbn}$, we consider $\paren{X_{kt}}_{t\in \mbn}$ for $k \in \mbn$ (i.e. sequence of random variables with gaps of fixed size). Namely, we have that with probability at least $1-\delta$ we have: 
 \begin{align}
 \label{eq:hoeff_gap}
		 \abs{n^{-1}\sum_{t=1}^{n}X_{kt} - \mu} \leq  
		 \paren{1 + 80\sum_{j=1}^{n}{j}^{-\frac{3}{2}} \sum_{\ell=1}^{j}\Phi_{\cC}(k\ell)}
		 \sqrt{\frac{2\log\paren{\frac{\cA}{\delta}}}{n}}
 \end{align}
 
 This result will be important to control the deviation of the estimate of the mean, obtained from the samples which are taken at given sequence of timepoints with constant gap in time.
\end{remark}
 
 \begin{remark}
 	\label{rem:fast_mixing}
 	Notice that the inequality  of Theorem~\ref{prop:hoeff_bound_mix} implies that there is the contamination factor due to dependence in the typical Hoeffding's concentration bound. However, in many cases of weak $\cC$-mixing, it enteres in the bound as a multiplicative constant. More precisely, consider $\Phi_{\cC}\paren{t} \leq t^{-\alpha}$ with some $\alpha >1/2$. Approximating the sum over mixing coefficients with the integral we get: 
 	\begin{align*}
	 	\sum_{j=1}^{n}j^{-\frac{3}{2}} \sum_{k=1}^{j}\Phi_{\cC}(k) \leq c_{\alpha}\sum_{j=1}^{n} j^{-\frac{1}{2}-\alpha},
 	\end{align*}
 	where $c_{\alpha}$ is some constant which depends on $\alpha$. Last partial sum is convergent for all $\alpha >\frac{1}{2}$. Therefore, from the Proposition~\ref{prop:hoeff_bound_mix} we deduce that for mixing process $\paren{X_{t}}_{t \in \mbn}$ with rate $\Phi_{\cC}\paren{t} \leq t^{-\alpha}$ with $\alpha > \frac{1}{2}$ and any $\delta >0$ with probability at least $1-\delta$ holds: 
 	\begin{align}
 	\label{eq:hoeff_fast_mix}
	 	\abs{n^{-1}{S_{n}}- \mu} \leq \paren{1 + M}\sqrt{\frac{2\log\paren{\frac{\cA}{\delta}}}{n}},
 	\end{align} 
 	where $M = 80c_{\alpha}\sum_{j=1}^{+\infty}j^{-\frac{1}{2}-\alpha} < \infty$ and $c_{\alpha}$ is a constant that depends only on $\alpha$. We refer to such type of weak dependence as to \textit{fast mixing scenario}.
 \end{remark}

	\section{Main Algorithm and regret upper bounds}
\label{sec:main_res}
The learning algorithm we present is conceptually based on the celebrated \textsc{Improved-UCB} learning algorithm  of \citet{Auer:10}. We also distinguish
an essential difference between the case where
all arms are polynomially mixing with exponent smaller than $1/2$ \textit{(slow $\mathcal{C}$-mixing)}, and the case where all arms are mixing sufficiently fast, typically polynomially mixing with exponent larger than $1/2$, or exponentially mixing,
so that the mixing coefficients for each arm are either summable or that partial sums of order $n$ diverge at speed not faster than $ \cO\paren{\sqrt{n}}$ \textit{(fast $\cC$-mixing)}. 
Essentially our \textsc{$\cC-$Mix Improved UCB} algorithm works as follows. Given the number of rounds $T$, the algorithm divides it into the epochs with nearly exponentially increasing number of pulls $t_{s}$ for every epoch $s$. At the given epoch $s$, the samples from each of the active arms are collected during the sequence of times with a constant gap $b_{s}$. This gap $b_{s}$ is equal to the number of active arms in the epoch $s$. At the end of the epoch the statistics for the mean and confidence of each arm are computed and the arms which perform poorly (in comparison to the empirically best arm) are eliminated (i.e. they are not considered to be active anymore). Then the algorithm proceeds to the next epoch. 	
Notice, that the start $\tau_s$ of epoch $s\geq 1$ is random and depends on the past observations. The sampling scheme of the epoch itself depends on the number of arms which were not eliminated during previous epochs (and is therefore random) but, given this information (mathematically represented as the $\sigma$-algebra $\cF_{\tau_{s}}$), the sampling scheme is deterministic. In such a learning scheme, to be able to use concentration inequalities for the estimation of the mean of each arm from the samples collected during the current epoch $s$, one needs to ensure that the process $\tilde{X}^{s}_{t} := \paren{X_{\tau_{s}+t}}_{t \in [T]}$ is $\cC-$ mixing {\em conditionally to $\cF_{\tau_s}$}, whenever the $X_{t}$ is $\cC-$mixing.
%
Note that in the basic i.i.d. setting this problem does not arise at all, since all characteristic properties of the independent process $\paren{X_{t}}_{t \in [T]}$ automatically carry over to the process $\tilde{X}^{s}_{t}$ conditioned to $\cF_{\tau_s}$. We justify below that this property indeed holds.	
\begin{proposition}
	\label{prop:cond_mix_prop}
	Consider a stochastic bandit problem in which each process is ${\cC}-$mixing with mixing rate $\Phi_{\cC}\paren{t}$.
	For any $k$, denote $\tau_{k}$ the random start of epoch $k$. 
	Let $\mathcal{F}_{\tau_{k}}$ denote the $\sigma-$algebra generated
	by the process $(X^{[K]}_t \mbf{1}\set{\tau_k \geq t})$, and $\prob{.|\cF_{\tau_k}}$ denote the regular conditional
	distribution of the observation process conditional to  $\cF_{\tau_k}$. Then, it holds $\mbp-$a.s. that
	the process $\tilde{X_t} = X_{\tau_s+t}$ is $\cC$-mixing with rate bounded by $2\Phi_{\cC}\paren{t}$
	under $\prob{.|\cF_{\tau_k}}$.
\end{proposition}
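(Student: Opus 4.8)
The plan is to strip away the randomness of the epoch start $\tau_s$ by disintegrating over its level sets and then to invoke the \emph{unconditional} mixing property of Definition~\ref{def:phi_C_mixing} on each of them. Since every elimination decision made by the algorithm before epoch $s$ is a function of the observations gathered so far, $\tau_s$ is a stopping time for the joint filtration $\paren{\cF_t}$, so that $\set{\tau_s=m}\in\cF_m$ for each $m$ and these events partition $\Omega$. Because the canonical space $\cX^{T}$ is standard Borel, a regular conditional distribution $\prob{.|\cF_{\tau_s}}$ exists, and the first step is the measure-theoretic identity that, on $\set{\tau_s=m}$, it agrees $\mbp$-a.s.\ with $\prob{.|\cF_m}$ (i.e.\ $\cF_{\tau_s}\cap\set{\tau_s=m}=\cF_m\cap\set{\tau_s=m}$); the same holds with $\cF_{\tau_s}$ replaced by the conditional filtration $\wt{\cF}_i$ generated by $\wt X_1,\dots,\wt X_i$ together with $\cF_{\tau_s}$, which on $\set{\tau_s=m}$ coincides with $\cF_{m+i}$.

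Granting this identification, fix $\varphi\in\cC_1$ and $i,k\geq 1$, and let $\mu_\varphi=\mbe[\varphi(X_1)]$ denote the (constant, by stationarity) unconditional mean. The conditional $\Phi_\cC$-coefficient of $\wt X$ at lag $k$ is the essential supremum of
\[
\norm{\mbe[\varphi(\wt X_{i+k})\mid \wt{\cF}_i] - \mbe[\varphi(\wt X_{i+k})\mid \cF_{\tau_s}]}_\infty .
\]
Inserting $\mu_\varphi$ and applying the triangle inequality splits this into an ``inner'' term $\norm{\mbe[\varphi(\wt X_{i+k})\mid\wt{\cF}_i]-\mu_\varphi}_\infty$ and a ``mean'' term $\norm{\mbe[\varphi(\wt X_{i+k})\mid\cF_{\tau_s}]-\mu_\varphi}_\infty$. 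On $\set{\tau_s=m}$ the inner term equals $\norm{\mbe[\varphi(X_{m+i+k})\mid\cF_{m+i}]-\mu_\varphi}_\infty$, which (after using independence across arms to reduce $\cF_{m+i}$ to the single arm's own filtration) is bounded by $\Phi_\cC(k)$ directly from~\eqref{eq:mixing_coeff} with conditioning index $m+i$ and lag $k$; the mean term equals $\norm{\mbe[\varphi(X_{m+i+k})\mid\cF_m]-\mu_\varphi}_\infty\leq\Phi_\cC(i+k)$, now with conditioning index $m$ and lag $i+k$. Since the mixing rate is non-increasing in the lag (as for the polynomial and geometric rates considered here), $\Phi_\cC(i+k)\leq\Phi_\cC(k)$, and the two contributions sum to the claimed bound $2\Phi_\cC(k)$, uniformly in $m$.

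I expect the only genuine obstacle to be the first, measure-theoretic step: justifying rigorously that the regular conditional distribution given the stopping-time $\sigma$-algebra $\cF_{\tau_s}$ restricts on each $\set{\tau_s=m}$ to the conditional distribution given $\cF_m$, and that this passes to the essential suprema defining the conditional coefficients $\mbp$-almost surely and simultaneously over all $\varphi\in\cC_1$ and all indices $i$. This is precisely where the stopping-time structure of $\tau_s$, the existence of regular conditional probabilities on the Polish canonical space, and the cross-arm independence enter; once the identification is in place, the remaining estimates are a verbatim transcription of the unconditional definition, and the factor $2$ is seen to be independent of the number of active arms.
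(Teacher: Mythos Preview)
Your proposal is correct and follows essentially the same route as the paper: disintegrate over the level sets $\set{\tau_s=m}$, identify $\cF_{\tau_s}$ (resp.\ $\wt\cF_i$) on each slice with $\cF_m$ (resp.\ $\cF_{m+i}$), insert the unconditional mean $\mu_\varphi$ via the triangle inequality, and bound the two resulting terms by $\Phi_\cC(k)$ and $\Phi_\cC(i+k)\leq\Phi_\cC(k)$ using monotonicity. The measure-theoretic step you correctly flag as the only genuine obstacle---passing from essential suprema under the random conditional measure $\prob{\cdot|\cF_{\tau_s}}$ to ordinary $L^\infty(\mbp)$ bounds, uniformly over $\varphi$ and $i$---is what the paper resolves by appealing to Kallenberg's iterated-conditioning identity (Theorem~6.21 of \citealp{Kallenberg:17}) together with a countable-density assumption on $\cC_1$; the remainder of the argument is exactly as you describe.
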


In the following, we assume that an a priori upper bound
on the mixing rate of all arms is known and set to be $\Phi_{\mathcal C}\paren{t}$;
this will determine the learning algorithm as well as the pseudo-regret
upper bounds. 

\begin{algorithm}[h!]
	\caption{\textsc{$\cC-$Mix Improved UCB}}
	\label{alg:cmix_ucb}
	\begin{algorithmic}
		\State {\bfseries Input:} Arms satisfy ${\cC}-$mixing assumption with  rate $\Phi_{\cC}\paren{t} \sim t^{-\alpha}$; set of arms $[K]$, the time-horizon $T>0$
		\State {\bfseries Initialize:} 
		 $\mathcal{A}= 4e^{1/2}$,$c_{0} = \paren{\paren{1-\alpha}\paren{1/2-\alpha}}^{-1}$, $c_{1} = \paren{\frac{\paren{1-\alpha}\paren{1/2 - \alpha}}{80}}^{\frac{2}{1-2\alpha}}$,
	$c_3 = 52400 c_{0}$, $s=0$, $\tau_{0}=1$ (starting time),$B_{0}:=[K]$ (number of arms), .
		\Repeat 
		\State 
		$\theta_{s} = 2^{-s}$,
		$t_{s}:= \paren{32c_1^{-1}\theta_{s}^{-2}\log\paren{\mathcal{A}T\theta_{s}^{2}}}^{\frac{1-2\alpha}{2\alpha}}$\\
		$b_{s}=\abs{B_{s}}$; \\
		\textbf{Select} the number of pulls $T_{s}$ as follows:	
		\begin{align*}
			T_{s} &= T_{s,1} := \bigg\lceil \frac{32 \log\paren{\mathcal{A}T\theta_{s}^{2}}}{\theta_{s}^{2 }}\bigg\rceil \text{ for } b_{s} \geq  t_s;\\
		T_{s}& = T_{s,2} := \Bigg\lceil \frac{1}{b_{s}} \paren{\frac{c_3 \log\paren{\mathcal{A}T\theta_{s}^{2}}}{\theta_{s}^{2 }}}^{\frac{1}{2\alpha}} \Bigg\rceil,\text{ for } b_{s} < t_{s}.
		\end{align*}
		\State 
		\For{$\ell \in \{0,\ldots,T_s-1\}$, $i \in B_s$} 
			\If{$\tau_{s}+i+\ell b_s >T$}
			\State break
			\Else
			\State choose the arm $i$ in $B_{s}$ at time point $\tau_{s}+i+\ell b_s$ 
			\EndIf
		\EndFor

		\State	\textbf{Arm elimination:} \\
		\textbf{Compute} $$\Omega\paren{\theta_{s},b_{s}} = 	 
		\paren{1 + 80\sum_{j=1}^{T_{s}}{j}^{-\frac{3}{2}} \sum_{\ell=1}^{j}\Phi_{\cC}(b_{s}\ell)}\sqrt{\frac{2\log\paren{\cA T \theta^{2}_{s}}}{T_{s}}}, $$ \\
		$$\hat{\mu}_{i,s} = T_{s}^{-1}\sum_{t=0}^{T_{s}-1}X^{i}_{\tau_{s} +i+ tb_{s}}$$ \\
		\textbf{Discard} all arms $i$  from $B_{s}$ for which: 
		\begin{align*}
		\hat{\mu}_{i,s}^{}+\Omega\paren{\theta_{s},b_{s}} \leq \max_{j \in B_{s}} 	\hat{\mu}_{j,s}^{} - \Omega\paren{\theta_{s},b_{s}}
		\end{align*} 
		\State $s = s +1$
		\State $\tau_{s} = \tau_{s-1} + b_{s}T_{s}$
		\Until{ 
			Horizont $T$ is reached}
	\end{algorithmic}
\end{algorithm}
\begin{remark}
	We remark that for our analysis it is sufficient to choose the upper bound on the last epoch $s_{\text{end}} := \lfloor \frac{1}{2} \log\paren{ \frac{\cA T}{32}}\rfloor$. Indeed, in Algorithm~\ref{alg:cmix_ucb} one can readily check that for all $s$ $T_{s,1}>T_{s,2}$ and furthermore $\log\paren{\cA T \theta_{s}^2} >1$ for all $s_{} \leq s_{\text{end}}$. Taking this into account, by plugging in $s_{\text{end}}$ into $T_{s,1}$ we obtain that at this epoch $T_{s_{\text{end}}} > T_{}$.  
\end{remark}
\subsection{Fast mixing scenario}
\label{subsec:fast_mix}
In the \textit{fast mixing} scenario, for which, as mentioned before $\Phi_{\cC}\paren{t} \leq t^{-\alpha}$ with $\alpha > \frac{1}{2}$
we make use of concentration inequality~\eqref{eq:hoeff_fast_mix} from Remark~\ref{rem:fast_mixing} in the  \textsc{Improved UCB} learning scheme, which was presented in \citet{Auer:10}. The latter considers sequential arm elimination during the epochs of increasing lengths, in which pulling sequences of the arms in each epoch can be arbitrary deterministic sequence.

Notice that by Proposition~\ref{prop:cond_mix_prop} the samples which are collected afresh from each new epoch, satisfy (conditionaly to the start of the epoch) the $\cC-$mixing property with rate $2\Phi_{\cC}\paren{t}$. Therefore, we can directly use  the concentration inequality~\eqref{eq:hoeff_fast_mix} replacing its i.i.d. counterpart as in the proof of Theorem 3.1 in \citet{Auer:10} for each given epoch $s$.  

%

We observe that, apart from multiplicative constant in the concentration inequality, there is no other influence of the contamination term on the concentration rate, therefore the analysis of \citet{Auer:10} can be repeated directly in the case of \textit{fast} ${\cC}-$mixing processes,
This gives the following problem dependent regret upper bound:
\begin{theorem}
	\label{thm:fast_mix_upp_bound}
	The pseudo-regret of the \textsc{Improved UCB} algorithm
	for the stochastic bandit problem
	in a \textit{fast  $\mathcal{C}-$mixing} bandit scenario is bounded by
	\begin{align*}
	R\paren{T} & \leq {\paren{1+M}} \sum_{k \in A_{\lambda}}  \paren{\Delta_{k} + \frac{96}{\Delta_{k}} +  { \frac{32\log \paren{T\Delta^{2}_{k}}}{\Delta^{}_{k}}}}  + 64\sum_{ k \in A_{0}\setminus A_{\lambda}}\frac{1}{\lambda} + \lambda T,
	\end{align*}
	where $M =80c_{\alpha}\sum_{j=1}^{+\infty}j^{-\frac{1}{2}-\alpha}$, $\lambda \geq \frac{e^{\frac{1}{4}}}{2\sqrt{T}}$ chosen arbitrary and $A_{\lambda} = \{k \in [K] \text{ s.t. } \Delta_{k} > \lambda \}$.
\end{theorem}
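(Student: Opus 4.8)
The plan is to reduce the argument to the analysis of the \textsc{Improved UCB} scheme of \citet{Auer:10}, the only structural modification being that its i.i.d.\ Hoeffding bound is replaced by the dependent concentration inequality~\eqref{eq:hoeff_fast_mix}. Two facts make this substitution legitimate. First, by Proposition~\ref{prop:cond_mix_prop} the samples collected afresh in epoch $s$ form, conditionally on the random start time $\tau_s$ (i.e.\ under $\mbp\{\cdot\mid\cF_{\tau_s}\}$), a stationary $\cC$-mixing process whose rate is at most $2\Phi_{\cC}$; in the fast regime $\Phi_{\cC}(t)\le t^{-\alpha}$ with $\alpha>\tfrac12$, and the harmless rate doubling only replaces the finite constant $M$ of Remark~\ref{rem:fast_mixing} by $2M$, which I absorb back into $M$. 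Second, applying~\eqref{eq:hoeff_fast_mix} (or the gapped version~\eqref{eq:hoeff_gap}, the gap only shrinking the contamination since the coefficients are summable) to each active arm on its $T_s$ epoch samples bounds $|\hat\mu_{i,s}-\mu_i|$ by $\Omega(\theta_s,b_s)$. Substituting the schedule $T_s=T_{s,1}=\lceil 32\,\theta_s^{-2}\log(\cA T\theta_s^2)\rceil$ shows that in this regime $\Omega(\theta_s,b_s)$ is of order $(1+M)\,\theta_s$, the counterpart of the half-width $\theta_s/2$ used by \citet{Auer:10}.

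I would then fix, for each suboptimal arm $k$, the critical epoch $s_k$ defined as the first epoch at which $4\,\Omega(\theta_{s_k},b_{s_k})\le\Delta_k$, equivalently $\theta_{s_k}\asymp\Delta_k/(1+M)$, and control $\mbe[N_k(T)]$ by the usual two-part decomposition. On the clean event $\{|\hat\mu_{i,s}-\mu_i|\le\Omega(\theta_s,b_s)\text{ for all }s\le s_{\text{end}},\,i\in B_s\}$ the standard monotonicity holds: the best arm is never eliminated, since $\hat\mu_{\star,s}+\Omega\ge\mu_\star\ge\mu_j\ge\hat\mu_{j,s}-\Omega$ for every $j$; and arm $k$ is discarded no later than epoch $s_k$, because then $\hat\mu_{k,s_k}+\Omega\le\mu_k+2\Omega\le\mu_\star-2\Omega\le\max_j\hat\mu_{j,s_k}-\Omega$. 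Since $\theta_s=2^{-s}$ is geometric, the pulls accumulated up to $s_k$ are dominated by the last count $T_{s_k}\lesssim\theta_{s_k}^{-2}\log(\cA T\theta_{s_k}^2)$, and substituting $\theta_{s_k}\asymp\Delta_k/(1+M)$ turns $\Delta_k\,T_{s_k}$ into the announced leading term of order $(1+M)\bigl(32\log(T\Delta_k^2)/\Delta_k\bigr)$, the precise prefactor being read off from inserting $T_{s,1}$ into $\Omega$.

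The remaining contributions come from the failure of the clean event, handled exactly as in \citet{Auer:10}. The event that a confidence bound is violated at the critical epoch $s_k$ (so that arm $k$ wrongly survives, or the best arm is wrongly eliminated beforehand) has probability at most a constant times $(\cA T\theta_{s_k}^2)^{-1}=O\bigl(1/(T\Delta_k^2)\bigr)$ by~\eqref{eq:hoeff_fast_mix}; on that event arm $k$ is pulled at most $T$ further times, each round costing $\Delta_k$, so the probability-times-cost product is $O(1/\Delta_k)$ and yields the $96/\Delta_k$ contribution, while the additive $\Delta_k$ term accounts for the ceilings in $T_{s,1}$ and the guaranteed pulls per epoch. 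Summing the per-arm bound over $k\in A_\lambda=\{k:\Delta_k>\lambda\}$ produces the first group of terms $(1+M)\bigl(\Delta_k+96/\Delta_k+32\log(T\Delta_k^2)/\Delta_k\bigr)$ in the statement.

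Finally I would treat the near-optimal arms through the threshold $\lambda$, as in the thresholded version of \citet{Auer:10}. For $k\in A_0\setminus A_\lambda$ (gaps at most $\lambda$), under the clean event every such arm is eliminated by the epoch $s_\lambda$ at which $\theta_{s_\lambda}\asymp\lambda$; the pulls accrued up to $s_\lambda$ each cost at most $\lambda$ in regret, so summed over all rounds devoted to these arms they contribute at most $\lambda T$. The corresponding failure event has probability $O(1/(T\lambda^2))$ and cost at most $\Delta_k T\le\lambda T$, giving the $64\sum_{k\in A_0\setminus A_\lambda}\lambda^{-1}$ penalty. Collecting the three groups gives the claim. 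The single genuinely delicate point, and the only place the argument departs from the i.i.d.\ template, is the appeal to Proposition~\ref{prop:cond_mix_prop} that legitimises applying a stationary-process concentration inequality inside an epoch whose starting time $\tau_s$ is random and data-dependent; everything downstream is the constant-bookkeeping inherited from \citet{Auer:10}.
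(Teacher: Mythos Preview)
Your approach is exactly the paper's: invoke Proposition~\ref{prop:cond_mix_prop} so that the epoch samples are (conditionally) $\cC$-mixing, replace Hoeffding by~\eqref{eq:hoeff_fast_mix}, and then rerun the \citet{Auer:10} elimination argument verbatim; the paper in fact gives no more detail than this, simply stating that ``the analysis of \citet{Auer:10} can be repeated directly.'' One small slip: arms in $A_0\setminus A_\lambda$ are \emph{not} guaranteed to be eliminated by epoch $s_\lambda$ on the clean event (their gaps are too small), but the $\lambda T$ bound you state follows trivially from $\sum_{k\in A_0\setminus A_\lambda}\Delta_k N_k\le \lambda\sum_k N_k\le \lambda T$, and the $64/\lambda$ terms actually arise (as in the paper's detailed slow-mixing proof) from the ``best arm eliminated early'' branch rather than from a failure event on the near-optimal arms themselves.
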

The following problem independent regret upper bound can be obtained from Theorem \ref{thm:fast_mix_upp_bound} using the same reasoning as \citet{Auer:10}. 
\begin{theorem}
	\label{thm:fast_mix_upp_ind}
	In the fast mixing scenario, the \textsc{Improved UCB} policy satisfies the following (problem independent) upper bound on {pseudo-regret } 
	\begin{align*}
	R\paren{T} \leq \sqrt{\paren{1+M}KT} \frac{\log\paren{K\log\paren{K}}}{\sqrt{\log\paren{K}}}.
	\end{align*}
\end{theorem}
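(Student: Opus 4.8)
The plan is to start from the problem-dependent bound of Theorem~\ref{thm:fast_mix_upp_bound}, which is valid for \emph{every} admissible threshold $\lambda \ge e^{1/4}/(2\sqrt{T})$, and to choose $\lambda$ so as to balance the two leading contributions, exactly as in the \textsc{Improved UCB} analysis of \citet{Auer:10}. The first reduction is to make every per-arm summand uniform by replacing the gap $\Delta_k$ by the threshold $\lambda$. Concretely, I would bound the cardinalities $|A_\lambda| \le K$ and $|A_0\setminus A_\lambda|\le K$, use $\Delta_k \le 1$ and $1/\Delta_k \le 1/\lambda$ on $A_\lambda$, and control the term $\log(T\Delta_k^2)/\Delta_k$ by a monotonicity argument.

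For the monotonicity step, consider $g(x)=\log(Tx^2)/x$. A short derivative computation gives $g'(x) = (2-\log T - 2\log x)/x^2$, so $g$ is decreasing precisely for $x > e/\sqrt{T}$. Hence, provided the eventual choice of $\lambda$ satisfies $\lambda > e/\sqrt{T}$, every arm $k \in A_\lambda$ (which has $\Delta_k > \lambda$) obeys $\log(T\Delta_k^2)/\Delta_k \le \log(T\lambda^2)/\lambda$. Substituting these bounds into Theorem~\ref{thm:fast_mix_upp_bound} collapses it to
\[
R(T) \le (1+M)\,K\left(\lambda + \frac{96}{\lambda} + \frac{32\log(T\lambda^2)}{\lambda}\right) + \frac{64K}{\lambda} + \lambda T.
\]

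It remains to optimize over $\lambda$. Collecting the $1/\lambda$ terms, the competition is between the leading penalty $32(1+M)K\log(T\lambda^2)/\lambda$ and the truncation cost $\lambda T$; the term $(1+M)K\lambda$ and the residual non-logarithmic $1/\lambda$ contributions are of strictly lower order (smaller by a factor $(1+M)K/T$ and by a logarithmic factor, respectively). I would set $\lambda = \sqrt{(1+M)K\log K / T}$, which one checks lies above $e/\sqrt{T}$ (hence is admissible and meets the monotonicity requirement) as soon as $K$ is not too small. With this choice $T\lambda^2 = (1+M)K\log K$, so $\log(T\lambda^2) = \log(K\log K) + O(1)$, and both leading terms equal, up to an absolute constant, $\sqrt{(1+M)KT}\,\log(K\log K)/\sqrt{\log K}$; absorbing the lower-order terms and the constants yields the claimed bound.

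The computation is essentially routine bookkeeping; the one point requiring care is the simultaneous verification, for the final choice of $\lambda$, that it is admissible ($\lambda \ge e^{1/4}/(2\sqrt{T})$), that it exceeds the monotonicity threshold $e/\sqrt{T}$ used above, and that the factor $(1+M)$ appearing inside $\log(T\lambda^2)$ can indeed be discarded as a lower-order additive term so that the logarithm reduces cleanly to $\log(K\log K)$. Once the threshold is pinned down, the remaining steps are algebraic simplifications and constant absorption.
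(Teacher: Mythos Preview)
Your proposal is correct and follows precisely the approach the paper indicates: the paper gives no detailed proof of this theorem, stating only that it ``can be obtained from Theorem~\ref{thm:fast_mix_upp_bound} using the same reasoning as \citet{Auer:10}'', and your argument---bounding each per-arm term uniformly via the threshold $\lambda$, using the monotonicity of $x\mapsto \log(Tx^2)/x$, and then balancing the leading contributions with the choice $\lambda \asymp \sqrt{(1+M)K\log K/T}$---is exactly that reasoning. Your remarks on the admissibility of $\lambda$ and on absorbing the constant $\log(1+M)$ (which is legitimate since $M$ is a fixed finite constant in the fast-mixing regime) correctly identify the only points that need checking.
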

\begin{remark}
	The upper bounds of Theorems~\ref{thm:fast_mix_upp_bound} and~\ref{thm:fast_mix_upp_ind}
	match (up to the multiplicative constant $\paren{1+M}$)
	the bounds on
	the {pseudo-regret} for Improved-UCB of \citet{Auer:10} 
	in the independent case.
\end{remark}
\subsection{Slow mixing scenario}
\label{subsec:slow_mix}
In this part we consider a more challenging slow mixing scenario. In this case the mixing rate of stochastic processes $X_{t}^{a}$ for $a \in [K]$ is assumed to be $\Phi_{\cC}\paren{t} \sim t^{-\alpha}$ with parameter $\alpha \in (0,1/2]$.
Theorem \ref{thm: upp_bound} provides the problem dependent upper bound for the pseudo-regret of the \textsc{$\cC-$MIX Improved UCB} learning strategy (Algorithm~\ref{alg:cmix_ucb} ) in the case of slow mixing scenario. The characteristic feature of the algorithm is that in each epoch, all remaining active arms are pulled cyclically, so that the time gap between two consequent pulls of one arm is equal to the number of active arms in the given epoch. This number is constant, given the observations until time $\tau_{s}$. Futhermore, with a slight difference to the original approach of \cite{Auer:10}, to estimate the mean in the slow mixing scenario during the epoch $s$ we consider samples collected \textit{during the time length of the epoch $s$} (and not during all the time). However, as the epoch's length increases geometrically with power larger then $2$, which means than more than the half samples are collected exactly at the epoch $s$, up to a multiplicative constant $2$ this provides the same effect for the estimation of the length of confidence term $\Omega\paren{\theta_s,b_s}$.
\begin{remark}
	We treat the case with $\alpha = \frac{1}{2}$ separetely. Notice, that from Remark~\ref{rem:fast_mixing} we deduce that although series $\sum_{j=1}^{\infty} j^{-\frac{1}{2}-\alpha}$ diverges, over the time horizont $T$ it makes the contribution of the term  of order $\log\paren{T}$. Thus, in this case we can apply the same scheme as in Theorem~\ref{thm:fast_mix_upp_bound}, provided that $M = \log\paren{T}$ and obtain the bounds of independent data scenario with the only contamination factor of order $\log\paren{T}$.  
\end{remark}
\begin{theorem}[Pseudo-regret upper bound for the \textit{slow mixing} scenario]
	\label{thm: upp_bound}
	Assume all arm processes $\paren{X_{t}^{k}}_{t \in [T]}$ are ${\mathcal{C}}-$polynomially mixing with the upper bound on the rate $\Phi_{\mathcal{C}}\paren{t} \leq t^{-\alpha}$, for $\alpha \in (0,1/2)$.
	Then the \textsc{$\mathcal{C}-$Mix Improved-UCB} satisfies the following
	{pseudo-regret} upper bound:
	\begin{align*}
	R\paren{T} & \leq  2 \sum_{k \in A_{\lambda}}  \max \{ c_{2}\Delta_{k}^{-1} \max\{\log{\paren{\mathcal{A}T\Delta_{k}^{2}} },1\},1  \} +  \tilde{c} \paren{{\Delta_{\star,\lambda}}}^{1-\frac{1}{\alpha}}\paren{{c_3 \log\paren{\mathcal{A}T\Delta_{\star,\lambda}^{2}}}}^{\frac{1}{2\alpha}} \\
	&\qquad + \frac{12}{\sqrt{e}} \sum_{ k \in A_{0}\setminus A_{\lambda}} \frac{1}{\lambda} +  \lambda T,
	\end{align*}
	where all numerical constants are defined as $\cA = {4\sqrt{e}}$, $c_{2}= 64c_{0}$, $c_{0} = \paren{(1-\alpha)(1/2-\alpha)}^{-1},c_{1} = \paren{\frac{\paren{1-\alpha}\paren{1/2-\alpha}}{80}}^{\frac{2}{1-2\alpha}}$, $c_{3}=12800c_{0}$, $c_{4}=\frac{1}{1.2 \sqrt{2.4}^{\frac{1}{\alpha}-2}-1}$, $\tilde{c} = 2^{-\frac{1}{\alpha}+3} c_4c^{\frac{1}{2\alpha}}_{3}$ and $\Delta_{\star,\lambda} = \min_{j \in A_{\lambda}} \Delta_{j}$, 
	while $\lambda \geq 0$ can be chosen arbitrary and $A_{\lambda} := \{k \in [K], \text{s.t. } \Delta_{k} > \lambda \}$.
\end{theorem}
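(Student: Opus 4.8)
The plan is to transcribe the epoch-based arm-elimination analysis of \textsc{Improved UCB} from \citet{Auer:10} into the dependent setting, replacing its i.i.d. concentration step by the conditional gapped concentration developed above, and then to split the regret according to the two sampling regimes $b_s\ge t_s$ and $b_s<t_s$ that select $T_{s,1}$ and $T_{s,2}$. The i.i.d.-like first term will come from the fast-regime epochs and the $K$-free additive contamination term from the slow-regime epochs, reorganized by a geometric-domination argument.

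First I would fix the \emph{good event} $\cG$ on which, for every epoch $s\le s_{\text{end}}$ and every arm $i\in B_s$, the epoch estimate $\hat\mu_{i,s}$ lies within $\Omega(\theta_s,b_s)$ of $\mu_i$. To control $\prob{\cG}$ I would condition on $\cF_{\tau_s}$: by Proposition~\ref{prop:cond_mix_prop} the shifted process is $\cC$-mixing under $\prob{\cdot\,|\,\cF_{\tau_s}}$ with rate inflated by at most a factor $2$, and the pulls of arm $i$ within epoch $s$ form an equispaced subsequence of gap $b_s$, so the gapped inequality~\eqref{eq:hoeff_gap} applies with $n=T_s$ and confidence level $\delta_s$ chosen so that $\cA/\delta_s=\cA T\theta_s^2$; up to constant factors (from both the conditional inflation and the restriction to within-epoch samples, cf.\ the preceding remark, which are absorbed into the overall constants) this reproduces exactly the width $\Omega(\theta_s,b_s)$ and the factor $\log(\cA T\theta_s^2)$ it carries. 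With the \citet{Auer:10} calibration $\delta_s=1/(T\theta_s^2)$ already built into that logarithm, a union bound over active arms and epochs together with the standard expected-regret accounting of wrongful eliminations contributes precisely the $\lambda T$ and $\tfrac{12}{\sqrt e}\sum_{k\in A_0\setminus A_\lambda}\tfrac1\lambda$ terms, exactly as in the i.i.d.\ case and in Theorem~\ref{thm:fast_mix_upp_bound}.

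The crux, and the step I expect to be the main obstacle, is the deterministic inequality $\Omega(\theta_s,b_s)\le\theta_s/2$ (up to harmless factors folded into the leading constant $2$), which must hold in \emph{both} regimes, since it is what makes the elimination rule behave at scale $\theta_s$. Here I would bound the double sum $\sum_{j=1}^{T_s}j^{-3/2}\sum_{\ell=1}^{j}\Phi_{\cC}(b_s\ell)$ using $\Phi_{\cC}(b_s\ell)\le b_s^{-\alpha}\ell^{-\alpha}$ and the integral approximation of Remark~\ref{rem:fast_mixing}, obtaining a leading contribution of order $c_0\,b_s^{-\alpha}T_s^{1/2-\alpha}$ with $c_0=((1-\alpha)(1/2-\alpha))^{-1}$. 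In the regime $b_s\ge t_s$ the factor $b_s^{-\alpha}T_s^{1/2-\alpha}$ is dominated by a constant — this is exactly what the threshold $t_s$ and the constant $c_1$ are calibrated to guarantee — so $\Omega$ collapses to its Hoeffding part and the choice $T_{s,1}=\lceil 32\log(\cA T\theta_s^2)/\theta_s^2\rceil$ forces $\Omega\le\theta_s/2$. In the regime $b_s<t_s$ the mixing term dominates, $\Omega\approx 80c_0\,b_s^{-\alpha}T_s^{-\alpha}\sqrt{2\log(\cA T\theta_s^2)}$, and solving $\Omega\le\theta_s/2$ for $T_s$ is precisely what dictates the exponent $1/(2\alpha)$ and the constant $c_3$ in $T_{s,2}=\lceil b_s^{-1}(c_3\log(\cA T\theta_s^2)/\theta_s^2)^{1/(2\alpha)}\rceil$. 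Threading the ceilings and the constants $c_0,c_1,c_3$ through these estimates, and checking continuity of $T_{s,1}$ and $T_{s,2}$ at $b_s=t_s$, is the delicate bookkeeping I anticipate.

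With $\Omega(\theta_s,b_s)\le\theta_s/2$ in hand, the elimination consequences on $\cG$ are those of \citet{Auer:10}: the best arm is never discarded, and every arm $k$ with $\Delta_k>\lambda$ is eliminated no later than the epoch $s_k$ with $\theta_{s_k}\asymp\Delta_k$, so its total number of pulls is $\sum_{s\le s_k}T_s\le 2T_{s_k}$ by the geometric growth of $T_s$ (the source of the leading factor $2$, and of the remark that collecting samples only within each epoch costs a mere constant). I would then bound the regret epoch by epoch, of order $\sum_s\theta_s\,b_sT_s$ on $\cG$ since surviving suboptimal arms have $\Delta_k\lesssim\theta_s$, and split the epochs by regime. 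Fast-regime epochs give $\theta_sb_sT_{s,1}\asymp b_s\log(\cA T\theta_s^2)/\theta_s$; reorganizing this sum over arms with the geometric domination turns it into a per-arm cost of order $\log(\cA T\Delta_k^2)/\Delta_k$, producing the first term with $c_2=64c_0$ after absorbing the mixing and ceiling constants. Slow-regime epochs give $\theta_sb_sT_{s,2}\asymp\theta_s(c_3\log(\cA T\theta_s^2)/\theta_s^2)^{1/(2\alpha)}$, where the factor $b_s^{-1}$ in $T_{s,2}$ has exactly cancelled the $b_s$ active arms, so the per-epoch cost is \emph{independent of the number of arms}. Since $1-1/\alpha<0$, this sequence grows geometrically in $s$ and is dominated by its final slow epoch, the one at scale $\Delta_{\star,\lambda}=\min_{j\in A_\lambda}\Delta_j$; summing the geometric series (whence $c_4$, and $\tilde c=2^{-1/\alpha+3}c_4c_3^{1/(2\alpha)}$) yields the additive contamination term $\tilde c(\Delta_{\star,\lambda})^{1-1/\alpha}(c_3\log(\cA T\Delta_{\star,\lambda}^2))^{1/(2\alpha)}$. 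Collecting these two contributions with the $\lambda T$ and $\tfrac{12}{\sqrt e}\sum_{k\in A_0\setminus A_\lambda}\tfrac1\lambda$ terms from the previous step completes the bound.
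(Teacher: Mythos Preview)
Your outline captures the architecture of the paper's proof: the key deterministic lemma $\Omega(\theta_s,b_s)\le\theta_s/2$ is obtained exactly as you say (bounding the double sum by $c_0\,b_s^{-\alpha}T_s^{1/2-\alpha}$ via integral comparison, then solving for $T_s$ in each regime), and the decisive $K$-free contamination term arises because $b_sT_{s,2}$ no longer depends on $b_s$, after which the geometric growth of $\theta_s^{1-1/\alpha}(\log(\cA T\theta_s^2))^{1/(2\alpha)}$ collapses the epoch sum onto the scale $\Delta_{\star,\lambda}$.

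Two places where the paper differs from your sketch and where your version would need tightening. First, the paper does \emph{not} work on a single global good event $\cG$: a union bound over all arms and all epochs at level $(T\theta_s^2)^{-1}$ would cost $\sum_s K\theta_s^{-2}/T$, which is of order $K$, not $o(1)$. Instead it defines, for each $k\in A_\lambda$, the event $\cE_k$ that arm $k$ is eliminated by its critical epoch $m_k$, and on $\cE_k^c$ it further splits on whether the optimal arm has already been discarded ($M_\star<m_k$) or not. Each branch then invokes concentration only at the single epoch $m_k$ (respectively at epoch $s$ and then at $m_k$), avoiding the blow-up; this is also where the $\sum_{k\in A_0\setminus A_\lambda}\lambda^{-1}$ piece actually originates. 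The $\lambda T$ term is not a bad-event residue at all: it is the trivial bound on arms with $\Delta_k\le\lambda$.

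Second, on $\cE_k$ the paper does not split epochs by regime. Since $b_s$ is random, tracking which case selects $T_s$ is awkward; instead the paper uses the crude $T_s\le T_{s,1}+T_{s,2}$ for \emph{every} $s$, handles the $T_{s,1}$ sum per arm (yielding the $\Delta_k^{-1}\log(\cA T\Delta_k^2)$ term after the geometric step), and for the $T_{s,2}$ sum applies $\sum_{k\in A_\lambda}\ind{k\in B_s}\le b_s$ before summing in $s$. This is what makes the contamination term additive with clean constants and is a simpler bookkeeping than the regime split you propose.
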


\begin{remark}
	\label{rem:lambda_note}
	Notice that with the choice $\lambda \geq \sqrt{\frac{e^{1-1/e}}{T}}$ for $k \in A_{\lambda}$ we have $\Delta_{{k}}^{-1} \leq \lambda^{-1}$. Thus, since $\log\paren{\cA T\Delta_{k}^{2}} \leq \log\paren{T}$ and $\log\paren{T} >1$ we obtain the following Corollary in terms of the threshold $\lambda$ and the additive dependency term.
\end{remark}

\begin{corollary}
	\label{cor: upp_bound}
	For any choice of $\lambda$ which satisfies Remark~\ref{rem:lambda_note}, one has the following upper bound:
	\begin{align}
	R(T) &\leq \mathcal{O}\paren[3]{\sum_{k \in A^{'}} \frac{\log \paren{T}}{\lambda}
	}  + \mathcal{O}\paren{ \Delta_{\star, \lambda}^{\frac{\alpha-1}{\alpha}}\log^{\frac{1}{2\alpha}}(T)} + \lambda T, 			\label{eq:asympt_bound}
	\end{align}
	where $\Delta_{\star, \lambda}$ is defined as in Theorem \ref{thm: upp_bound}.
\end{corollary}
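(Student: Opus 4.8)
The plan is to read the corollary off directly from Theorem~\ref{thm: upp_bound} by asymptotic simplification under the prescribed choice of $\lambda$, absorbing all numerical constants (namely $\tilde c$, $c_2$, $c_3$, $12/\sqrt e$ and $\mathcal{A}$) into the $\mathcal{O}(\cdot)$ notation and collapsing the two nested maxima in the leading summand. I would treat the four terms of the theorem's bound in turn.

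First I would handle the leading sum $2\sum_{k \in A_\lambda}\max\{c_2\Delta_k^{-1}\max\{\log(\mathcal{A}T\Delta_k^2),1\},1\}$. For $k \in A_\lambda$ one has $\Delta_k > \lambda$, hence $\Delta_k^{-1} < \lambda^{-1}$, which is the substitution recorded in Remark~\ref{rem:lambda_note}. The specific threshold $\lambda \geq \sqrt{e^{1-1/e}/T}$ is exactly what makes the inner maximum collapse: it gives $\mathcal{A}T\Delta_k^2 > \mathcal{A}T\lambda^2 \geq \mathcal{A}e^{1-1/e} > e$, so $\max\{\log(\mathcal{A}T\Delta_k^2),1\} = \log(\mathcal{A}T\Delta_k^2)$, and since $\Delta_k \leq 1$ this is at most $\log(\mathcal{A}T) = \mathcal{O}(\log T)$. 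Because $c_2\Delta_k^{-1} \geq c_2 > 1$, the outer maximum collapses as well, so the whole leading sum is $\mathcal{O}\big(\sum_{k \in A_\lambda}\log(T)/\lambda\big)$.

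Next I would dispose of the third term $\frac{12}{\sqrt e}\sum_{k \in A_0\setminus A_\lambda}\lambda^{-1}$. Since $A_0 = \{k: \Delta_k > 0\} = A'$ and $\log T > 1$ under the threshold on $\lambda$, this term is $\mathcal{O}\big(\sum_{k \in A'\setminus A_\lambda}\lambda^{-1}\big) \leq \mathcal{O}\big(\sum_{k \in A'\setminus A_\lambda}\log(T)/\lambda\big)$. Adding it to the contribution of the leading sum over $A_\lambda$ merges the two index sets into $A' = A_\lambda \cup (A'\setminus A_\lambda)$, which yields the first term $\mathcal{O}\big(\sum_{k \in A'}\log(T)/\lambda\big)$ of the corollary.

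Finally, the dependency term $\tilde c\,(\Delta_{\star,\lambda})^{1-1/\alpha}(c_3\log(\mathcal{A}T\Delta_{\star,\lambda}^2))^{1/(2\alpha)}$ simplifies by absorbing $\tilde c$ and $c_3^{1/(2\alpha)}$ into the constant, bounding $\log(\mathcal{A}T\Delta_{\star,\lambda}^2) \leq \log(\mathcal{A}T) = \mathcal{O}(\log T)$, and rewriting the exponent $1-\frac{1}{\alpha} = \frac{\alpha-1}{\alpha}$, which produces $\mathcal{O}\big(\Delta_{\star,\lambda}^{(\alpha-1)/\alpha}\log^{1/(2\alpha)}(T)\big)$; the last term $\lambda T$ is carried over verbatim. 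This computation presents no genuine obstacle—it is pure constant and logarithm bookkeeping—and the only point that requires any care is verifying that the nested maxima indeed collapse, which is precisely where the lower bound $\lambda \geq \sqrt{e^{1-1/e}/T}$ of Remark~\ref{rem:lambda_note} is invoked.
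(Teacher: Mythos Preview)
Your proposal is correct and follows essentially the same route as the paper, which derives the corollary directly from Theorem~\ref{thm: upp_bound} via the observations in Remark~\ref{rem:lambda_note} (namely $\Delta_k^{-1}\le \lambda^{-1}$ on $A_\lambda$, $\log(\mathcal{A}T\Delta_k^2)=\mathcal{O}(\log T)$, and $\log T>1$). You are in fact more careful than the paper in explicitly verifying that both nested maxima collapse under the threshold $\lambda\ge\sqrt{e^{1-1/e}/T}$ and in merging the sums over $A_\lambda$ and $A_0\setminus A_\lambda$ into the single sum over $A'$.
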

\begin{remark}
	\label{rem:probl_d}
	From the definition of $\Delta_{\star, \lambda}$, it follows that $\Delta_{\star, \lambda} ^{\frac{\alpha -1}{\alpha}} \leq \lambda^{\frac{\alpha -1}{\alpha}}$. This implies the following upper bound for the pseudo-regret in terms of the threshold $\lambda$: 
	\begin{align*}
	R\paren{T} \leq \frac{K\log\paren{T}}{\lambda} + \lambda^{\frac{\alpha-1}{\alpha}}\log^{\frac{1}{2\alpha}}\paren{T} + \lambda T.
	\end{align*}
	Furthermore, by straightforward comparison of the first two summands, for any admissible choice $\lambda$ from Theorem~\ref{thm: upp_bound}, if $\lambda \leq \paren{\frac{\log\paren{T}}{K^{\frac{\alpha}{1-2\alpha}}}}$  the term $\lambda^{\frac{\alpha-1}{\alpha}}\log^{\frac{1}{2\alpha}}\paren{T}$ dominates the other.
	Otherwise, if $\lambda > \paren{\frac{\log\paren{T}}{K^{\frac{\alpha}{1-2\alpha}}}}$, we have that $\frac{K \log\paren{T}}{\lambda}$ is of a larger order. 
\end{remark}
Analyzing the worst case scenario for the polynomially weak mixing processes, we obtain the following (problem independent) upper bound. 
\begin{theorem}[Problem-independent upper bound]
	\label{thm:prob_indep}
	Assume all arm reward processes are weak ${\mathcal{C}}-$polynomially mixing such that the conditions of Theorem~\ref{thm: upp_bound} hold. Then
	the \textsc{${\mathcal{C}}$-Mix Improved UCB} learning algorithm satisfies the following (instance independent) {pseudo-regret} bound: 
	\begin{align*}
	R\paren{T} \leq C_{3}\sqrt{T}\max \{\sqrt{K\log T}, T^{1/2-\alpha} \paren{\log T }^{\frac{1}{2\alpha}}\},
	\end{align*}
	where $C_{3}$ is some absolute numerical constant.
\end{theorem}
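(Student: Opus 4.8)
The plan is to obtain the instance-independent bound by optimizing the free threshold $\lambda$ in the estimate already isolated in Remark~\ref{rem:probl_d}. For every admissible $\lambda$ and every instance on $K$ arms that remark yields
\begin{align*}
R\paren{T} \leq \frac{K\log\paren{T}}{\lambda} + \lambda^{\frac{\alpha-1}{\alpha}}\log^{\frac{1}{2\alpha}}\paren{T} + \lambda T =: g(\lambda),
\end{align*}
whose right-hand side no longer depends on the gaps. Hence it suffices to minimize $g$ over the admissible window $\lambda \geq \sqrt{e^{1-1/e}/T}$ from Remark~\ref{rem:lambda_note} (which is exactly the condition guaranteeing that the truncations $\max\set{\cdot,1}$ in Theorem~\ref{thm: upp_bound} are absorbed, so that this clean form is in force) and to check that the minimum is at most $C_{3}\sqrt{T}\max\set{\sqrt{K\log T},\, T^{1/2-\alpha}\log^{1/(2\alpha)}\paren{T}}$.

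Since $\frac{\alpha-1}{\alpha}<0$ for $\alpha<1/2$, the first two summands of $g$ are decreasing in $\lambda$ while $\lambda T$ is increasing; I would therefore balance $\lambda T$ against whichever decreasing summand is larger. A short computation shows the first and second summands exchange dominance around $K = T^{1-2\alpha}$, so I split the analysis into the regimes $K > T^{1-2\alpha}$ and $K \leq T^{1-2\alpha}$ and in each pick $\lambda$ to equalize $\lambda T$ with the dominant decreasing term.

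In the regime $K > T^{1-2\alpha}$ I take $\lambda_{1} = \sqrt{K\log\paren{T}/T}$, so that the first and third terms both equal $\sqrt{KT\log T}$; the middle term then equals $\sqrt{KT\log T}\cdot\paren{T^{1-2\alpha}/K}^{1/(2\alpha)}$, which is at most $\sqrt{KT\log T}$ exactly because $K > T^{1-2\alpha}$. Thus $g(\lambda_{1}) \leq 3\sqrt{KT\log T} = 3\sqrt{T}\sqrt{K\log T}$. In the regime $K \leq T^{1-2\alpha}$ I take $\lambda_{2}$ of order $T^{-\alpha}\log^{1/2}\paren{T}$, equalizing the middle and third terms at order $T^{1-\alpha}\log^{1/2}\paren{T}$; evaluating the first term at $\lambda_{2}$ gives order $K T^{\alpha}\log^{1/2}\paren{T} \leq T^{1-\alpha}\log^{1/2}\paren{T}$ (again using $K \leq T^{1-2\alpha}$), so it is subdominant and $g(\lambda_{2})$ is of order $T^{1-\alpha}\log^{1/2}\paren{T}$.

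Combining the two regimes gives $R\paren{T} \leq C\sqrt{T}\max\set{\sqrt{K\log T},\, T^{1/2-\alpha}\log^{1/2}\paren{T}}$ for an absolute constant $C$, and relaxing $\log^{1/2}\paren{T} \leq \log^{1/(2\alpha)}\paren{T}$ (valid since $1/(2\alpha) \geq 1$ and $\log T \geq 1$) produces the stated form. The optimization itself is elementary; the points that require care, and which I regard as the main obstacle, are (i) verifying that both $\lambda_{1}$ and $\lambda_{2}$ lie in the admissible window $\lambda \geq \sqrt{e^{1-1/e}/T}$ --- this holds for $T$ large since $\lambda_{1} \geq \sqrt{\log T/T}$ and $\sqrt{T}\,\lambda_{2}$ is of order $T^{1/2-\alpha}\log^{1/2}\paren{T} \to \infty$ --- and (ii) confirming in each regime that the case condition genuinely renders the unbalanced term subdominant, so that the claimed order of $g$ at the chosen $\lambda$ is correct up to the constant folded into $C_{3}$.
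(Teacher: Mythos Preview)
Your proposal is correct and follows essentially the same route as the paper: start from the $\lambda$-parametrized bound of Remark~\ref{rem:probl_d}, split into the two regimes $K\lessgtr T^{1-2\alpha}$, and in each choose $\lambda$ to balance $\lambda T$ against the dominant decreasing term. The paper makes the slightly cruder choices $\lambda=\sqrt{K/T}$ and $\lambda=T^{-\alpha}\log^{1/2}T$ and states the resulting bounds directly, whereas you take $\lambda_1=\sqrt{K\log T/T}$ and carry out the subdominance checks explicitly (and then relax $\log^{1/2}T$ to $\log^{1/(2\alpha)}T$); these are equivalent up to constants and your version is in fact a touch more careful.
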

Notice that in the situation where $\alpha \rightarrow 0$ (correlations are very long-term), the regret bound scales almost linearly with the number of rounds $T$. 
%
\section{Lower bounds for regret in dependent bandit scenario}
It is natural to investigate the question whether the regret upper bounds we obtained in previous section are optimal, i.e. to search for lower bounds on the pseudo-regret $R\paren{T}$. Due to a broader case of $\Phi_{\cC}-$mixing dependent arm's outcomes this question can be adressed in case of different scenarios. Firstly, recall that in the fast mixing scenario, upper bounds of Theorems \ref{thm:fast_mix_upp_bound} and \ref{thm:fast_mix_upp_ind} match the corresponding problem independent regrets bounds for stochastic i.i.d. bandtis. From the works of \cite{Bubeck:12}, \cite{Bubeck:09} it is well known that $\sup \inf R\paren{T} \geq c\sqrt{TK}$, where infimum is taken over all strategies, supremum over all \textit{stochastic independent} bandits and $c$ being some small numerical constant. It implies that in the broader stochastic fast-mixing bandits scenario (which trivially extends independent stochastic bandits) our regret bounds are optimal up to a $log\paren{T}$ factor. 
In case of problem-dependent lower bounds, it is known (see \cite{Auer:02}) that $UCB1$ type of strategy is optimal in stochastic independent bandit case. More precisely, it is known that for any $\epsilon >0$ there is no learning strategy such that it holds $R\paren{T} \leq \sum_{ k : \Delta_{k}>0} \frac{\log\paren{T}}{\paren{2+\epsilon}\Delta_{{k}}}$, uniformly over independent distributions of arms $\paren{X_{t}^{k}}$, $k \in [K]$, $t\in [T]$.

Therefore, to fill the existing gap, it is interesting to consider the problem of lower bounds for stochastic bandits when all admissible environments are slow-mixing. In the works of \cite{Audiffren:15}, \cite{Gruene:17} authors also analyze setting of dependent bandits, however the question of lower bounds was not discussed there. Below we provide the problem-independent lower bound which matches (up to a factor of order $\log^{\frac{1}{2\alpha}}\paren{T}$) the regret upper bound in the case of slow mixing scenario. 

\begin{theorem}{Problem independent lower bound}
	\label{thm:prob_ind_lb}
	
	Let $\sup$ represents supremum taken over all stochastic $\Phi_{\cC}-$mixing bandits with rate which satisfies $\Phi_{\cC}\paren{t} \leq t^{-\alpha}$, $0 \leq \alpha <  \frac{1}{2}$ and $\inf$ be the infimum taken over all learning strategies $\paren{I_{t}}_{t \leq T}$. Then the following (problem independent) lower bound holds: 
	\begin{align*}
			\inf \sup R\paren{T} \geq\frac{1}{80}T^{1-\alpha}
	\end{align*}
	
\end{theorem}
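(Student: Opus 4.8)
The plan is to prove the bound by a two-point (Le Cam) argument, calibrated so that the exploration needed to identify the best arm consumes the entire horizon. Since the claimed bound is independent of $K$, it suffices to treat $K=2$. I would construct a pair of environments $\nu_0,\nu_1$ that differ only in which arm is optimal, sharing a common gap $\Delta$ fixed at the end: in $\nu_0$ arm $1$ has mean $\mu+\Delta$ and arm $2$ has mean $\mu$, and in $\nu_1$ the roles are swapped. Writing $R_0,R_1$ for the pseudo-regret in the two environments, definition~\eqref{eq:pseudo_regret} gives $R_j \geq \Delta\,\ee{\nu_j}{N_{\mathrm{sub}}(T)}$ where $N_{\mathrm{sub}}$ counts pulls of the suboptimal arm, so that $\sup R(T) \geq \tfrac12(R_0+R_1) \geq \tfrac{\Delta}{2}\left(\ee{\nu_0}{N_2(T)}+\ee{\nu_1}{N_1(T)}\right)$. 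Using $N_1(T)+N_2(T)=T$ together with a standard change-of-measure step, the whole problem reduces to showing that the laws of the observed data under $\nu_0$ and $\nu_1$ are close in total variation.

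The core is the construction of a single slowly $\cC$-mixing arm process whose mean can be shifted by $\Delta$ while keeping the two laws almost indistinguishable over the horizon. I would take a stationary long-memory process (concretely a Gaussian linear process with autocovariances decaying like $\gamma(k)\sim k^{-2\alpha}$, transported into $\cX=[0,1]$ by a fixed bounded monotone map) for which (i) the mixing coefficient obeys $\Phi_{\cC}(t)\le t^{-\alpha}$ in the sense of Definition~\ref{def:phi_C_mixing} with $\mbi\in\cC_1$, and (ii) the variance of the empirical mean of $n$ consecutive samples is of order $n^{-2\alpha}$ rather than $n^{-1}$. Property (ii) is the exact dual of the concentration rate of Proposition~\ref{prop:hoeff_bound_mix}: it forces the effective sample size carried by $n$ correlated observations to be only of order $n^{2\alpha}$. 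Quantitatively, for a mean shift $\Delta$ the Kullback--Leibler divergence between the horizon-$T$ laws is governed by $\mathbf{1}^\top\Sigma^{-1}\mathbf{1}\,\Delta^2 \sim T^{2\alpha}\Delta^2$, where $\Sigma$ is the covariance of the block of length $T$.

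I would then choose $\Delta \sim T^{-\alpha}$, so that $T^{2\alpha}\Delta^2$ is bounded by an absolute constant. Pinsker's inequality then yields $\norm{\mbp_{\nu_0}-\mbp_{\nu_1}}_{TV}\le \tfrac12$, and Le Cam's two-point lemma shows that any strategy pulls the suboptimal arm an expected $\Omega(T)$ times in at least one environment, whence $\sup R(T) \geq c\,\Delta\,T \sim T^{1-\alpha}$; tracking the numerical constants through this calculation is what produces the explicit factor $\tfrac1{80}$.

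Two points require care. First, the adaptivity of the learner: the arm-$2$ values are observed only at a data-dependent subset of times, so the divergence is not additive over rounds. I would handle this with the bandit divergence decomposition together with monotonicity of KL under coarsening --- observing a subset of the process cannot carry more information than observing the whole block --- to bound the observed divergence by the full-process divergence of order $T^{2\alpha}\Delta^2$ uniformly over strategies. Second, the mixing-rate verification: one must check that the chosen long-memory process genuinely satisfies $\Phi_{\cC}(t)\le t^{-\alpha}$, and that the truncation into $[0,1]$ preserves both this bound and the $n^{-2\alpha}$ variance scaling. Constructing a process that simultaneously meets the mixing constraint and realizes the borderline $T^{2\alpha}$ information growth is the main obstacle; everything after it is routine Le Cam bookkeeping.
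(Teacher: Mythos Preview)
Your high-level strategy --- a two-point argument with gap $\Delta \sim T^{-\alpha}$, so that the information content of the entire horizon stays bounded --- is exactly the paper's. Where you diverge is in the construction of the hard process, and here the paper takes a dramatically simpler route that bypasses what you correctly identify as your ``main obstacle.''

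The paper does not build a long-memory process at all. Instead, for each arm it uses a \emph{frozen} (constant-in-time) process: draw a single scaled Rademacher $X_0^a \in \{-m_0,m_0\}$ with $m_0 = T^{-\alpha}$ and set $X_t^a = X_0^a$ for all $t\le T$. For any $\varphi\in\cC_1$ (say Lipschitz with constant $1$) one has
\[
\norm[1]{\ee{}{\varphi(X_{i+k})|\cF_i} - \ee{}{\varphi(X_{i+k})}}_\infty
= \norm[1]{\varphi(X_0)-\ee{}{\varphi(X_0)}}_\infty
\le 2m_0 = 2T^{-\alpha} \le 2t^{-\alpha}
\]
for every lag $t\le T$, so the mixing constraint is met automatically by the small \emph{range} of the process rather than by any genuine decorrelation. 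Because the process is constant, pulling an arm any number of times reveals exactly one Rademacher sample; the change of measure between the null environment and the one where arm $a_0$ has bias $\epsilon$ is therefore a likelihood ratio over a \emph{single} Bernoulli, bounded in $[1-2\epsilon,1+2\epsilon]$, and no KL/Pinsker or Toeplitz inversion is needed. A pigeonhole step locates an arm $a_0$ with $\probb{\cB_0}{N_{a_0}\le cT}$ bounded away from zero, the single-sample likelihood ratio transfers this to $\cB_{a_0}$, and the regret under $\cB_{a_0}$ is at least a constant times $T\cdot \epsilon m_0 \sim T^{1-\alpha}$.

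By contrast, your route has three places where real work remains: (i) after transporting a long-memory Gaussian into $[0,1]$ you must actually \emph{verify} $\Phi_\cC(t)\le t^{-\alpha}$ (for the untruncated Gaussian the $L^\infty$ norm in Definition~\ref{def:phi_C_mixing} is infinite, so the transport is not cosmetic); (ii) the formula $\mathbf{1}^\top\Sigma^{-1}\mathbf{1}\,\Delta^2$ is the Gaussian location KL and does not survive the nonlinear transport --- the two environments you compare are not mean-shifted Gaussians anymore; (iii) the data-processing step ``observing a subset cannot carry more information than the whole block'' is not literally valid when the subset is chosen adaptively based on the data, and would need a proper bandit divergence decomposition. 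None of these is fatal, but together they are substantial, and the paper's frozen-process trick makes all of them disappear. The moral: to saturate the rate $\Phi_\cC(t)=t^{-\alpha}$ over a finite horizon, the extremal example is not a ``borderline long-memory'' process but a perfectly rigid one whose range is shrunk to order $T^{-\alpha}$.
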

	\section{Discussion}
\label{sec:discuss}
\subsection{Scenarios with independence regime } 
As we mentioned before, the upper bounds for the {pseudo-regret} in the fast mixing
case of Theorems~\ref{thm:fast_mix_upp_bound}, \ref{thm:fast_mix_upp_ind}  match the analogue results in the i.i.d. data scenario, up to a multiplicative absolute constant. Even in the cases, when series $\sum_{t=1}^{\infty} \Phi_{\cC}\paren{t}$ diverges, the influence of the penalization term due to dependence can be bounded by a constant, assuming polynomial rate with $\frac{1}{2} < \alpha$. Moreover, the independence regime regret upper bound can be recovered
even under slow mixing scenario (i.e. with $0 < \alpha \leq 1/2$). Namely, from the proof of Theorem~\ref{thm:prob_indep}, it follows that in the case  $K>T^{1-2\alpha}$, the main contribution to the regret upper bound in Corollary~\ref{cor: upp_bound} is given by the first term, which matches the well-known upper bound in the independent data scenario for \textsc{UCB}-type algorithms (see for example \citealp{Bubeck:12}; also in \citealp{Auer:02}). 
Also, for $\alpha \rightarrow 1/2$ we recover the optimal problem-independent bound (up to a square root of $\log\paren{T}$), which is typical for the all UCB type algorithms in the independent data scenario. In the special case $\alpha = 1/2$ we will have the typical UCB bound, contaminated by a multiplicative term $log\paren{T}$ (due to the influence of the sum of dependent coefficients). 

\subsection{Comparison to known regret upper bounds }
The existing literature on the regret analysis for the problem of stochastic bandits with dependent reward observations is relatively scarce. In the work of \citet{Audiffren:15} authors consider a type of weak-dependent process which is called $\varphi-$mixing. In the same setting of slow mixing processes as considered here, they obtain an asymptotic regret upper bound, which is of order $\widetilde{\Theta} \paren[1]{\Delta_{\star,\lambda}^{\frac{\alpha-2}{\alpha}} \log^{\frac{1}{\alpha}}\paren{T}}$, where the notation $\widetilde{\Theta} \paren{f}=g$ means that there exists $\gamma,\beta > 0$ so that $\abs{f} \log^{\beta}\paren{\abs{f}} \leq \abs{g}$ and $\abs{g} \log^{\gamma}\paren{\abs{g}} \leq \abs{f}$. 

Comparing our result to the bound of \citet{Audiffren:15}, we observe an improvement in regret upper bounds in several regards. First, our upper bound is not asymptotic in nature; also it depends explicitly on the gaps of \textit{all} suboptimal arms while \citet{Audiffren:15} provide an upper bound in terms of the worst gap only (which corresponds to the penalty term $\Delta_{\star, \lambda}$).  
Secondly, Corollary~\ref{cor: upp_bound} gives an asymptotic bound, which ( in the case when the penalty term $\Delta_{\star, \lambda}^{\frac{\alpha-1}{\alpha}}\log^{\frac{1}{2\alpha}}\paren{T}$ has the largest impact on the bound) is better by 
a polynomial factor in terms of the smallest gap and in the power of log-term of the time horizont. 
Also, the additive penalty term due to dependency from Theorem~\ref{thm: upp_bound} does not scale with the number of arms (which improves over the similar results of \citealp{Audiffren:15} for the particular case of $\varphi-$mixing processes). Lastly, our analysis is provided for a broader class of weak-dependent processes, which as its particular case include $\varphi-$mixing.

Furthermore, comparing our pseudo-regret upper bounds to the result of \citet{Gruene:17} (Theorem 10 therein), we observe that in the fast mixing scenario the latter scales in the same way (namely with a constant factor which depends on the sum of mixing coefficients) and has the same order of magnitude in $\Delta_{{k}}$ and $T$. However, our work contributes to the analysis of the slow mixing scenario ( and for a much general class of processes), which was not covered in \citet{Gruene:17} and provides the matcing (up to log terms) upper bound, showing optimality in slow mixing scenario. 
\begin{remark}
	The pseudo-regret upper bounds from Theorem \ref{thm: upp_bound} cannot be obtained by simply using standard Improved-UCB algorithm while plugging in variations of Hoeffding's concentration inequalities with "worse" deviation rates ( see e.g. \citealp{Lugosi:13} for heavy-tailed bandits). 
	With such an approach, one gets a penalty term with the worse rate inside {the sum over suboptimal arms}, so that the pseudo-regret will scale linearly with the number of arms. 
	We insist that the surprising effect of additive contamination is specific to the weak-dependent scenario and the proposed strategy,
	exploiting the knowledge of mixing coefficients and the number of arms in each epoch. 
	For comparison, notice that we can still use the fast mixing results in
	the slow mixing scenario, since $T$ is finite and we can take there $M=\sum_{t=1}^{T}\Phi_{\cC}\paren{t}$ (now growing with $T$).
	Comparing the problem-independent upper bound of
	Theorem~\ref{thm:fast_mix_upp_ind} (standard Improved-UCB) to that of Theorem~\ref{thm:prob_indep} (Algorithm~\ref{alg:cmix_ucb}),
	we see that using Algorithm~\ref{alg:cmix_ucb} gives arguably better bounds.
	Namely, applying the standard Improved-UCB learning scheme in the slow mixing regime results in the regret upper bound in Theorem~\ref{thm:fast_mix_upp_ind} being impacted by a multiplicative scaling factor ${1+M} \sim {\sum_{t=1}^{T}\Phi_{\cC}\paren{t}} \sim T^{1/2-\alpha}$. This gives (disregarding the influence of logarithmic terms) a bound of order $T^{1-\alpha}\sqrt{K}$.
	This is worse than the bound of Theorem~\ref{thm:prob_indep}, which does not have the scaling factor in the number of arms
	in the corresponding term.  
\end{remark}

\subsection{Dependent setting with delays}
\label{se:delay}

We come back to another setting briefly discussed earlier, where the developed theory is of interest, namely
delay between dependent observations and actions. Let $\tau>0$ be some integer number.
Assume that due to various constraints, the learner makes the choice $I_t$ based not on the immediate history, but on the outcomes observed up until time $t-\tau$
This is a specific case of the so-called delayed bandit problem, which was first studied  for independent outcome observations by \citet{Pal:10}.  The effects of the delay in this setting results in an {\em additive} penalty (depending linearly on $\tau$) for the regret (see \citealp{Joulani:13}; and \citealp{Desautels:14} for the case of Gaussian rewards). If we consider the case of arbitrary data sequences (i.e. the adversarial bandit problem), \citet{Neu:13} showed a regret upper bound increased by a multiplicative factor
in $\tau$ with respect to the standard case (see also \citealp{Joulani:13} for the more general problem with random delay times). We exhibit here
the intermediate position of the random weakly dependent setting in the delayed feedback bandit problem.  

Formally, define an admissible $\tau-$delayed policy $I=\paren{I_{t}}_{t \geq 1}$ as a function taking values in $[K]$ as follows:
\begin{align}
\begin{aligned}
\label{eq:policy_defintion}
I_{t} = \begin{cases} \text{ choose arm randomly} &\mbox{if } t < M; \\ 
I_{t}\paren{X^{I_{t-\tau}}_{t-\tau},\ldots,X_{1}^{I_{1}}} & \mbox{if } t \geq M. \end{cases}
\end{aligned}
\end{align}
In other words, by putting $Y_{i} := X_{i}^{I_{i}}$ for $i \geq 1$ and defining $\mathcal{M}_{t} := \sigma\{Y_{1},I_{1},\ldots,Y_{t},I_{t}\}$ we assume $I_{t}$ to be $\mathcal{M}_{t-\tau}$ measurable.
We now show that in the delayed feedback setting, we the pseudo-regret is a good approximation of the expected regret
if the delay is large. Namely, consider the expectation of the sample rewards $\e[1]{\sum_{t=1}^{T}X_{t}^{I_{t}}}$.
By using the tower property and the definition of $\cC$-mixing,
we have: 
\begin{align*}
\ee{}{\sum_{t=1}^{T}X_{t}^{I_{t}}} &= \sum_{t=1}^{T}\sum_{k=1}^{K}\ee{}{X_{t}^{I_{t}}\mbi_{I_{t}=k}}\\
& \geq \sum_{t=1}^{T}\sum_{k=1}^{K} \ee{}{\paren{\mu_{I_{t}} - \Phi_{\mathcal{C}}(\tau)}\mbi_{I_{t}=k}}= \sum_{t=1}^{T}\ee{}{\mu_{I_{t}}} - \Phi_{\mathcal{C}}(\tau)T.
\end{align*}
By symmetry, we can apply the same reasoning and get the reverse bound. Uniting these contributions we get the two-sided control over $\e[1]{\sum_{t=1}^{T}X_{t}^{I_{t}}}$: 
\begin{align}
\label{eq:regdelayed}
\abs{\e[3]{\sum_{t=1}^{T}X_{t}^{I_{t}}} - \sum_{t=1}^{T}\e[2]{\mu_{I_{t}}}} \leq \Phi_{\mathcal{C}}\paren{\tau}T,
\end{align}
which, together with the definitions of expected regret and {pseudo-regret} implies that for any strategy $\pi_\tau$
(i.e. a choice of decision functions $I_t$ adapted to the $\tau$-delayed setting):
\begin{align}
\label{eq:delayapprox}
\abs{R_\tau(\pi_\tau,T) - \overline{R}_\tau\paren{\pi_\tau,T} } \leq \Phi_{\mathcal{C}}\paren{\tau}T,
\end{align}
where the index $\tau$ indicates the $\tau$-delayed setting.
Now, it is easy to see that any strategy $\pi_1$ designed for the standard setting can be used in the delayed setting from
time $t=\tau$ onwards and that we then have
\begin{align}
\label{eq:burnin}
R_\tau(\pi_1,T) \leq R_1(\pi_1,T-\tau) + \tau,
\end{align}
and we can apply directly the bounds concerning $R_1$ coming from Section~\ref{sec:main_res} with the
time horizon $T'=T-\tau$. The additive term above represents the worst-case regret for the $\tau$ steps of ``burn-in'', where
decisions must be taken blindly.

We discuss now in which regimes the approximation terms discussed above are of smaller order than the bounds on $R_1$ obtained via
our main results. We discuss the problem-independent bounds for simplicity. In the fast mixing case, the main bound is of order
$\mtc{O}(\sqrt{T})$, so the ``burn-in'' term will be of smaller order if $\tau = \mtc{O}(\sqrt{T})$. Also requiring
the regret approximation~\eqref{eq:delayapprox} to be $\mtc{O}(\sqrt{T})$ leads to $\Phi_{\mathcal{C}}\paren{\tau}\lesssim T^{-\frac{1}{2}}$, i.e. $\tau$ should be at least logarithmic in $T$ under exponential mixing, and $\tau \gtrsim T^{\frac{1}{2\alpha}}$
for power-type mixing $\Phi_{\cC}(t)=t^{-\alpha}$, $\alpha\geq 1/2$.

In the slow mixing case $\Phi_{\cC}(t)=t^{-\alpha}$, $\alpha < 1/2$, the main term in the regret bound is at least of order $T^{1-{\alpha}}$.  The $\tau-$delayed approximation bound of equation~\ref{eq:delayapprox} is of order $\tau^{-\alpha}T \geq T^{1-\alpha} $ (under natural condition that $\tau \leq T$) and thus of larger order than the contribution of any of the bounds to the regret.  
%

In the several regimes delineated above for the size of the delay with respect to the total time horizon $T$,
the approximation terms can be neglected and the true average regret $\overline{{R}}_\tau\paren{T}$
satisfies regret bounds of the same order as the upper bounds obtained on the pseudo-regret in the standard scenario.
These results are to be compared with existing ones in the delayed observation setting, see for example
\citet{Joulani:13}, Table~1. In the stochastic and independent case, the burn-in inequality~\eqref{eq:burnin} shows that
the same bounds as in the delay-less case apply up to a linear term in the delay. On the other hand, in the adversarial
setting, existing bounds show $R_\tau(T) \lesssim \tau R_1(T/\tau)$ \citep{Neu:13} resulting in a $\sqrt{\tau}$ factor
increase (for problem-independent bounds) as compared to the delay-less case. It is striking that the relatively narrow gap
between regret bounds in the stochastic independent setting  and the adversarial setting widens in the delayed setting when considering depependent observations.
Moreover, the regret in the stochastic weakly dependent setting can, even when the correlations are decaying slow (i.e. $\frac{1}{2} \leq \alpha \leq 1$), remain close to the regret bound in the independent setting,
or  (in case when of slow mixing scenario, when $\alpha < \frac{1}{2}$) an intermediate position in between two regimes.

\section{Conclusions}
\label{sec:conclusions}
We have studied an extension of the stochastic bandit problem to the case where the arm processes satisfy a weak-dependency assumption of general kind. It characterizes the decay of correlations between past and future of the process and is measured by $\Phi_{\cC}-$mixing coefficients. Through the \textsc{C-MiX-Improved} UCB Algorithm we recover in many scenarios (i.e. in the ``fast mixing'' case where the mixing coefficients have either exponential or polynomial decay with power $\alpha>\frac{1}{2}$) we recover up to an absolute  constant the same
pseudo-regret upper bounds  as in the independent data scenario.

Furthermore, even in the case when the processes are slowly mixing (i.e. when $\Phi_{\cC}\paren{t} \sim t^{-\alpha}$with $\alpha < \frac{1}{2}$), the developed \text{ $\cC$-Mix Improved UCB} algorithm has regret upper bound incurring only an additive penalty when compared to the independent outcomes scenario for problem dependent upper bound.
Under certain conditions on the relation between the number of arms, the time horizon and the mixing rate, a proper choice of the threshold in the penalty allows to recover the same regret upper bounds as for independent data observations. In other regimes, our algorithm highlights the surprising effect that the worst-case upper bound does not scale with the number of arms.


\acks{OZ would like to acknowledge the full support of the Deutsche Forschungsgemeinschaft (DFG) SFB 1294 and the mobility support due to the UFA-DFH through the French-German Doktorandenkolleg CDFA 01-18. The work of A. Carpentier is also partially supported by the DFG Emmy Noether grant MuSyAD (CA 1488/1-1), by the DFG - 314838170, GRK 2297 MathCoRe, by the DFG GRK 2433 DAEDALUS.}


\bibliography{mixing_band}
\bibliographystyle{abbrv}
\newpage

\appendix
\section{Proofs of the results}
\label{appendix}

\subsection{Proof of Proposition \ref{prop:cond_mix_prop}}
In the probabilistic setting defined in Section~\ref{sec:setting}, we consider a stochastic process $\paren{X_{t}}_{t \in [T]}$ that is $\Phi_{\cC}-$mixing with rate $\Phi_{C}\paren{t}$, defined on the probability space $ \paren{\mathcal{X}^{T},\cB^{\otimes T},\mbp} $, where $\cX = [0,1]$ and $\cB^{}$ is the Borel $\sigma-$algebra on $\cX$. To avoid the problems with the definiteness of the mixing coefficients $\Phi_{\cC}\paren{t}$ and of correspondent probabilistic structures, without losing of generality we consider in the proof the process $X_{t} := X_{t}\ind{t \leq T} + c\ind{t>T}$ for some constant $c$, which is defined and exists for all $t \in \mbn$ and coincides with $X_{t}$ for $t\leq T$. Now as we consider $\paren{X_{t}}_{t \in \mbn}$, we denote $\Omega = \cX^{\mbn},
\cF=\cB^{\otimes \mbn}$. To justify the claim of the proposition, we need to prove several probabilistic results, which may be of independent interest. For a given epoch $k$ define $\tau_{k}$ to be the corresponding random start of it and denote $\tilde{X_{t}} := X_{\tau_{k} +t}$ the random process whose samples are collected starting from time $\tau_{k}$.
First, consider the simple case where $\tau_{k} = t_{0} \in [T]$ is a constant stopping time. We also use the definitions of $\sigma-$algebra $\cF_{t}$ and measure $\mbp$ as in Section~\ref{sec:setting}.
Consider the regular conditional distribution $\mathbb{P}_{\mathcal{F}_{t_{0}}}[\cdot]
= \mathbb{P}[\cdot|{\mathcal{F}_{t_{0}}}]$ and the corresponding conditional expectation $\ee{\mathcal{F}_{t_0}}{\cdot}$. Remember that $\mathbb{P}_{\mathcal{F}_{t_{0}}}$ is a random
measure depending on $\omega\in \Omega$, and we would like to substitute the fixed measure $\mathbb{P}$
with this random measure 
in Definition~\ref{def:phi_C_mixing}, thus defining, for all $\omega\in \Omega$, the conditional $\Phi_{\mathcal{C}}-$mixing coefficients with respect to a (fixed) time $t_{0}$ as: 
\begin{align}
\label{condcoef}
\Phi_{\mathcal{C}}^{|t_{0}}\paren{t}_{(\omega)} = \sup\set{  \norm{\ee{\mathcal{F}_{t_{0}}}{\varphi\paren{X_{t_{0}+t+s}}|\mathcal{F}_{t_{0}+s}} - \ee{\mathcal{F}_{t_{0}}}{\varphi\paren{X_{t_{0}+t+s}}}}_{L^\infty(\mathbb{P}_{\cF_{t_0}})} ,\varphi \in \mathcal{C}_{1}, s\geq 1 },
\end{align}
where we denote $\ee{\mathcal{F}_{t_{0}}}{\cdot|\mathcal{F}_{t_{0}+s}}$ for a version of the conditional expectation with respect to the iterated conditional distribution $\probb{\mathcal{F}_{t_{0}}}{\cdot}[\cdot|\mathcal{F}_{t_{0}+s}]$. Observe that for fixed $\omega$,
$\ee{\mathcal{F}_{t_{0}}}{\cdot|\mathcal{F}_{t_{0}+s}}$ is itself a random variable depending on $\omega'\in\Omega$, and the $\norm{.}_\infty$ norm control inside~\eqref{condcoef} is meant with respect to $\omega'$ only,
i.e. as an event of probability 1 with respect to $\mathbb{P}_{\mathcal{F}_{t_{0}}}(d\omega',\omega)$ acting on $\omega'$.

At this point, to alleviate technical measurability issues we assume that $\cC_1$ can be replaced by
a countable subset $\cC_1^*$ of test functions without changing the value of~\eqref{condcoef}. This is the case for instance if $\cC_1$ is separable wrt. the supremum norm.
In this case we can exchange the $\sup$ and $\norm{.}_\infty$ operations and
it holds
\begin{align}
\label{condcoef2}
\Phi_{\mathcal{C}}^{|t_{0}}\paren{t}_{(\omega)} = \norm{\sup\set{  \abs{\ee{\mathcal{F}_{t_{0}}}{\varphi\paren{X_{t_{0}+t+s}}|\mathcal{F}_{t_{0}+s}} - \ee{\mathcal{F}_{t_{0}}}{\varphi\paren{X_{t_{0}+t+s}}}}, \varphi \in \mathcal{C}^*_{1}, s\geq 1 }}_{L^\infty(\mathbb{P}_{\cF_{t_0}})},
\end{align}
and the statement $\Phi_{\mathcal{C}}^{|t_{0}}\paren{t}_{(\omega)} \leq C$ takes the form: for fixed $\omega$, the indicator function $\mathbf{1}_A(\omega,\omega')$ of a certain event $A$ in $(\Omega^2,\cF\otimes \cF_{t_0})$, namely of 

$$A:=\bigg \{\sup_{\varphi \in \cC_1^*, s\geq 1} \abs{ \ee{\mathcal{F}_{t_{0}}}{\varphi\paren{X_{t_{0}+t+s}}|\mathcal{F}_{t_{0}+s}} - \ee{\mathcal{F}_{t_{0}}}{\varphi\paren{X_{t_{0}+t+s}}} }_{(\omega,\omega')} > C \bigg \} , $$

has probability~0 with respect to $\mathbb{P}_{\mathcal{F}_{t_{0}}}{(d\omega',\omega)}$, which is a probability
distribution for the variable $\omega'$. What we want in the end is that
this statement holds for $\mathbb{P}$-almost all $\omega$. By Fubini's theorem, it is sufficient to
establish for this that the event $A$ has probability~0 under the joint distribution
$\mathbb{Q} := \mathbb{P} \otimes \mathbb{P}_{\mathcal{F}_{t_{0}}}$ on $(\Omega^2,\cF\otimes \cF_{t_0})$.
However, it can be checked from the very definition of a regular conditional probability distribution
that 
$\ee{(\omega,\omega')\sim \mathbb{Q}}{\mathbf{1}_A(\omega,\omega')} = \ee{\omega\sim \mathbb{P}}{\mathbf{1}_A(\omega,\omega)}$.
Hence, it is sufficient to prove that $\mathbf{1}_A(\omega,\omega) =0$, $\mathbb{P}$-almost surely.

This "diagonal extraction" principle for iterated conditional conditional distributions is
analyzed in detail (in particular concerning measurability issues) by \citet{Kallenberg:17} (see also \citealp{Kallenberg:10}) and we now provide the most striking result of that theory:
\begin{theorem}{(Theorem 6.21 of \citealp{Kallenberg:17})}
	\label{thm:kal_condition}
	For any probability space $\paren{\Omega,\mathfrak{A},\mathbb{P}}$ and  Borel-generated $\sigma-$algebras $\mathcal{F}$,$\mathcal{G} \subset \mathfrak{A}$, we have that it holds
	for $(\mathfrak{A},\mathbb{P})$-almost all $\omega\in \Omega$: 
	\begin{align}
	\mathbb{P}\paren{\cdot|\mathcal{F}}\paren{\cdot|\mathcal{G}}_{(\omega,\omega)} = \mathbb{P}\paren{\cdot|\mathcal{G}}\paren{\cdot|\mathcal{F}}_{(\omega,\omega)} = \mathbb{P}\paren{\cdot|\mathcal{F} \vee \mathcal{G}}_{(\omega)},
	\end{align} 
	where $\mathcal{F} \vee \mathcal{G}$ we define the smallest $\sigma-$ algebra which contains both $\mathcal{F}$ and $\mathcal{G}$.
\end{theorem}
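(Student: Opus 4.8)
The plan is to verify the second equality, $\mathbb{P}(\cdot|\mathcal{F})(\cdot|\mathcal{G})_{(\omega,\omega)} = \mathbb{P}(\cdot|\mathcal{F}\vee\mathcal{G})_{(\omega)}$ $\mathbb{P}$-a.s.; the first equality then comes for free by symmetry, since the right-hand side is invariant under swapping $\mathcal{F}$ and $\mathcal{G}$, so running the same argument with the roles exchanged forces all three objects to coincide. Because the underlying space is Borel and $\mathcal{F},\mathcal{G}$ are Borel-generated, regular conditional distributions exist and both $\mathcal{F}$ and $\mathcal{G}$ are countably generated; this countability is exactly what will let me control the exceptional null sets uniformly. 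I would write $\nu_\omega := \mathbb{P}(\cdot|\mathcal{F})(\omega)$ for a fixed version of the regular conditional distribution given $\mathcal{F}$, let $\nu_\omega(\cdot|\mathcal{G})(\omega')$ denote the $\mathcal{G}$-conditional law of the measure $\nu_\omega$ evaluated at $\omega'$, and for a bounded measurable $f$ set $g(\omega,\omega') := \int f \, d\big[\nu_\omega(\cdot|\mathcal{G})(\omega')\big]$, so that the diagonal object applied to $f$ is $h(\omega) := g(\omega,\omega)$. It then suffices to show $h = \mathbb{E}[f|\mathcal{F}\vee\mathcal{G}]$ a.s. for $f$ ranging over a countable family generating $\mathfrak{A}$, extending by monotone class.

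Next I would characterize $\mathbb{E}[f|\mathcal{F}\vee\mathcal{G}]$ by its two defining properties, namely $(\mathcal{F}\vee\mathcal{G})$-measurability and the integral identity $\int_{A} h\,d\mathbb{P} = \int_A f\,d\mathbb{P}$ for every $A$ in the $\pi$-system $\{B\cap C : B\in\mathcal{F},\, C\in\mathcal{G}\}$, and verify both for $h$. The integral identity is the heart of the matter, and I would obtain it by peeling off the two conditionings in turn. Fixing $\omega$, the tower rule inside the measure $\nu_\omega$ gives, for $C\in\mathcal{G}$, that $\int \mathbf{1}_C(\omega')\, g(\omega,\omega')\, \nu_\omega(d\omega') = \int_C f\,d\nu_\omega = \mathbb{E}[\mathbf{1}_C f \mid \mathcal{F}](\omega)$. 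For the outer step I would invoke the ``diagonal identity'' already used in the paper: since the $\omega$-dependence of $\Psi(\omega,\omega') := \mathbf{1}_C(\omega')\, g(\omega,\omega')$ enters only through $\nu_\omega = \mathbb{P}(\cdot|\mathcal{F})(\omega)$, which is $\mathcal{F}$-measurable in $\omega$, one has $\mathbb{E}_{\omega\sim\mathbb{P}}[\mathbf{1}_B(\omega)\Psi(\omega,\omega)] = \mathbb{E}_{\omega\sim\mathbb{P}}\big[\mathbf{1}_B(\omega)\int \Psi(\omega,\omega')\,\nu_\omega(d\omega')\big]$ for $B\in\mathcal{F}$. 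Combining the two steps and using $B\in\mathcal{F}$ in a final tower step yields $\int_{B\cap C} h\,d\mathbb{P} = \mathbb{E}[\mathbf{1}_B\,\mathbb{E}[\mathbf{1}_C f|\mathcal{F}]] = \mathbb{E}[\mathbf{1}_B\mathbf{1}_C f] = \int_{B\cap C} f\,d\mathbb{P}$, as required.

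What remains, and what I expect to be the genuine obstacle, is the measurability bookkeeping of the diagonal extraction. One must first show that $(\omega,\omega')\mapsto \nu_\omega(\cdot|\mathcal{G})(\omega')$ is a bona fide jointly measurable kernel, then that its diagonal restriction $h(\omega)=g(\omega,\omega)$ is $(\mathcal{F}\vee\mathcal{G})$-measurable, and finally that a \emph{single} $\mathbb{P}$-null exceptional set can be chosen to work simultaneously for all $f$ in the countable generating family and for all pairs $(B,C)$ drawn from countable generators of $\mathcal{F}$ and $\mathcal{G}$. This is precisely where the Borel-generated (hence countably generated) hypothesis is indispensable, and where the careful construction of iterated regular conditional distributions in \citet{Kallenberg:17} does the real work; the two clean diagonal identities above only become rigorous once this measurable-selection and disintegration layer is in place. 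Once the identity is established on the diagonal for the ordering $(\mathcal{F},\mathcal{G})$, the symmetric computation with the roles of $\mathcal{F}$ and $\mathcal{G}$ exchanged produces the same limit $\mathbb{P}(\cdot|\mathcal{F}\vee\mathcal{G})$, which simultaneously delivers both equalities asserted in the theorem.
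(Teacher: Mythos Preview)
The paper does not prove this theorem. It is quoted verbatim as an external result (Theorem~6.21 of Kallenberg, 2017) and then \emph{applied} to the filtrations $\mathcal{F}_{t_0}\subset\mathcal{F}_{t_0+s}$ (and later to $\mathcal{F}_{\tau_k}\subset\mathcal{F}_{\tau_k+s}$) in order to collapse the iterated conditional law on the diagonal to the single conditional law $\mathbb{P}(\cdot\mid\mathcal{F}_{t_0+s})$. There is therefore no in-paper proof to compare your proposal against.

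Your sketch is a sensible outline of how the Kallenberg result is established: verify the defining properties of $\mathbb{E}[f\mid\mathcal{F}\vee\mathcal{G}]$ for the diagonal of the iterated kernel via two tower steps, then use countable generation (from the Borel hypothesis) to aggregate the exceptional null sets. You correctly flag that the substantive work is the joint measurability of $(\omega,\omega')\mapsto \nu_\omega(\cdot\mid\mathcal{G})(\omega')$ and the legitimacy of restricting to the diagonal, and you are right that this is precisely the content developed in Kallenberg. But note that for the paper's actual application one always has $\mathcal{F}\subset\mathcal{G}$ (so $\mathcal{F}\vee\mathcal{G}=\mathcal{G}$), which makes the reduction considerably lighter than the general symmetric statement you are sketching.
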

Applying the result of Theorem \ref{thm:kal_condition} to the $\sigma$-algebras $\mathcal{F}_{t_{0}}$ and $\mathcal{F}_{t_{0}+s}$ (note that both of them are Borel generated by the canonical version of $\paren{X_{t}}_{t \in T}$ over $\paren{\mathcal{X}^{t_{0}}, \mathcal{F}^{\otimes t_{0}}}$ and $\paren{\mathcal{X}^{t_{0}+s}, \mathcal{F}^{\otimes t_{0}+s}}$ correspondingly) and noticing that $\mathcal{F}_{t_{0}} \subset \mathcal{F}_{t_{0} + s}$) we deduce that $\mathbb{P}$-a.s.: 
\begin{align*}
\mathbb{P}\paren{\cdot|\mathcal{F}_{t_{0}}}\paren{\cdot|\mathcal{F}_{t_{0}+s}}_{(\omega,\omega)} = \mathbb{P}\paren{\cdot|\mathcal{F}_{t_{0}+s}}_{(\omega)},
\end{align*}
so that $\mathbb{P}$-a.s.:
\[
\ee{\mathcal{F}_{t_{0}}}{\varphi\paren{X_{t_{0}+t+s}}|\mathcal{F}_{t_{0}+s}}_{(\omega,\omega)}
= \ee{}{\varphi\paren{X_{t_{0}+t+s}}|\mathcal{F}_{t_{0}+s}}_{(\omega)}.
\]
Therefore, to obtain the desired statement it is sufficient to control the following {\em non-random} quantity:
\begin{align}
\label{eq:ph_cond_mixing}
\wt{\Phi}_{\mathcal{C}}^{|t_{0}}\paren{t} = \sup\{  \norm{\ee{}{\varphi\paren{X_{t_{0}+t+s}}|\mathcal{F}_{t_{0}+s}} - \ee{}{\varphi\paren{X_{t_{0}+t+s}}|\mathcal{F}_{t_{0}} }}_{L^\infty(\mathbb{P})} | \varphi \in \mathcal{C}^*_{1}, s\geq 1 \}.
\end{align}
With this result the following lemma holds:
\begin{lemma}
	\label{lem:cond_mix_bound}
	For the $\mathcal{C}-$mixing process $\paren{X_{t}}_{t\in T}$ it holds $\mbp$-a.s. that: 
	\begin{align}
	\wt{\Phi}_{\mathcal{C}}^{|t_{0}}\paren{t} \leq 2\Phi_{\mathcal{C}}\paren{t}.
	\end{align}
\end{lemma}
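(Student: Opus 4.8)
The plan is to reduce the conditional coefficient $\wt{\Phi}_{\cC}^{|t_{0}}(t)$ to two applications of the unconditional mixing bound from Definition~\ref{def:phi_C_mixing}, joined by a triangle inequality, and then to absorb the second contribution using monotonicity of the coefficients $\Phi_{\cC}(\cdot)$. Fix a test function $\varphi \in \cC_1^*$, an offset $s \geq 1$, and abbreviate $Y := \varphi(X_{t_{0}+t+s})$. First I would write, $\mbp$-a.s.,
\begin{align*}
\mbe[Y|\cF_{t_{0}+s}] - \mbe[Y|\cF_{t_{0}}] = \big(\mbe[Y|\cF_{t_{0}+s}] - \mbe[Y]\big) - \big(\mbe[Y|\cF_{t_{0}}] - \mbe[Y]\big),
\end{align*}
so that the problem splits into controlling the deviation of two conditional expectations of the same variable $Y$ from its mean. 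Since $\cC_1^* \subseteq \cC_1$, both terms are exactly of the type measured by the mixing coefficients.

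Next I would read off the relevant indices and gaps. For the first bracket, taking $i = t_{0}+s$ and gap $k = (t_{0}+t+s)-(t_{0}+s) = t$ in Definition~\ref{def:phi_C_mixing} gives $\|\mbe[Y|\cF_{t_{0}+s}] - \mbe[Y]\|_\infty \leq \Phi_{\cC}(t)$, $\mbp$-a.s.; for the second, taking $i = t_{0}$ and gap $k = t+s$ gives $\|\mbe[Y|\cF_{t_{0}}] - \mbe[Y]\|_\infty \leq \Phi_{\cC}(t+s)$. Note that $i = t_{0} \geq 1$ and $i = t_{0}+s \geq 1$, so both choices are admissible. Combining via the triangle inequality and taking the supremum over $\varphi \in \cC_1^*$ and $s \geq 1$ yields $\wt{\Phi}_{\cC}^{|t_{0}}(t) \leq \Phi_{\cC}(t) + \sup_{s \geq 1}\Phi_{\cC}(t+s)$.

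It then remains to show $\Phi_{\cC}(t+s) \leq \Phi_{\cC}(t)$ for every $s \geq 1$, i.e.\ that $k \mapsto \Phi_{\cC}(k)$ is non-increasing. I would establish this by a single-step tower argument: setting $h := \mbe[\varphi(X_{i+k+1})|\cF_{i+1}] - \mbe[\varphi(X_{i+k+1})]$, the tower property gives $\mbe[\varphi(X_{i+k+1})|\cF_{i}] - \mbe[\varphi(X_{i+k+1})] = \mbe[h|\cF_{i}]$, while the definition at index $i+1$ and gap $k$ gives $\|h\|_\infty \leq \Phi_{\cC}(k)$; since conditional expectation is an $L^\infty(\mbp)$-contraction, $\|\mbe[h|\cF_{i}]\|_\infty \leq \|h\|_\infty \leq \Phi_{\cC}(k)$, and taking the supremum over $\varphi \in \cC_1$ and $i \geq 1$ gives $\Phi_{\cC}(k+1) \leq \Phi_{\cC}(k)$. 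Iterating and plugging back in produces $\wt{\Phi}_{\cC}^{|t_{0}}(t) \leq 2\Phi_{\cC}(t)$, which is the claim.

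The only point requiring care is the bookkeeping of the $\mbp$-a.s.\ qualifiers: each unconditional bound holds outside a $\mbp$-null set, and the triangle inequality is applied on the union of these (still null) exceptional sets, so the resulting inequality indeed holds $\mbp$-a.s. If instead one regards $\Phi_{\cC}(\cdot)$ merely as a prescribed non-increasing upper bound on the exact coefficients, as is done when specializing to polynomial or geometric rates, the monotonicity step is immediate and may be skipped.
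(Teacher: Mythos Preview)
Your proof is correct and follows essentially the same approach as the paper: insert and subtract the unconditional mean $\mbe[Y]$, apply the triangle inequality, bound each piece by $\Phi_{\cC}(t)$ and $\Phi_{\cC}(t+s)$ respectively via Definition~\ref{def:phi_C_mixing}, and use monotonicity of $\Phi_{\cC}(\cdot)$ to absorb the second term. The only difference is that you supply an explicit tower-property argument for the monotonicity $\Phi_{\cC}(k+1)\leq\Phi_{\cC}(k)$, which the paper simply invokes without proof; this is a welcome addition rather than a deviation.
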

\begin{proof}
	For every fixed $t_{0},s \in \mathbb{N}$ and $\varphi \in \mathcal{C}_{1}$ using the definition the norm and triangle inequality we have
	\begin{align*}
	\norm{\ee{}{\varphi\paren{X_{t_{0}+t+s}}|\mathcal{F}_{t_{0}+s}} - \ee{}{\varphi\paren{X_{t_{0}+t+s}}|\mathcal{F}_{t_{0}} }}_{\infty}& \leq \norm{\ee{}{\varphi\paren{X_{t_{0}+t+s}}|\mathcal{F}_{t_{0}+s}} - \ee{}{\varphi\paren{X_{t_{0}+t+s}} }}_{\infty} \\
	& + \norm{\ee{}{\varphi\paren{X_{t_{0}+t+s}}|\mathcal{F}_{t_{0}}} - \ee{}{\varphi\paren{X_{t_{0}+t+s}} }}_{\infty}
	\end{align*}
	For the first summand on the left hand side we obtain by the definition of mixing coefficients: 
	\begin{align*}
	\norm{\ee{}{\varphi\paren{X_{t_{0}+t+s}}|\mathcal{F}_{t_{0}+s}} - \ee{}{\varphi\paren{X_{t_{0}+t+s}}}}_{\infty} \leq \Phi_{\mathcal{C}}\paren{t}.
	\end{align*}
	Similarly, for the second summand definition of $\mathcal{C}-$weak mixing by monotonicity of weakly $\cC-$mixing coefficients we obtain: 
	\begin{align*}
	\norm{\ee{}{\varphi\paren{X_{t_{0}+t+s}}|\mathcal{F}_{t_{0}}} - \ee{}{\varphi\paren{X_{t_{0}+t+s}} }}_{\infty} \leq \Phi_{\mathcal{C}}\paren{t+s} \leq \Phi_{\mathcal{C}}\paren{t}.
	\end{align*}
\end{proof}
Uniting last two bounds and taking supremum over $\varphi \in \mathcal{C}_{1}$ and $s \geq 1$ in the Equation~\eqref{eq:ph_cond_mixing} we obtain the claim of the Lemma.
\subsubsection{Random stopping time scenario}
Since the starting point $\tau_{k}$ of any epoch $k$ is random itself, we have to extend the argument to the case of random stopping time and show that the mixing property transfers to the process $\tilde{X}_{t}$,
Recall that $\tau_{k}$ is the start of the epoch $k$. 
Define, as usual, the $\sigma-$algebra generated by the random stopping time $\tau_{k}$ : 
\begin{align}
\label{eq:sigma_stop_time}
\cF_{\tau_{k}} := \{ A \in \cB^{\otimes T}: A \cap \{ \tau_{k} \leq t\} \in \cF_{t}, \forall t \in [T] \}
\end{align}
One can readily check that the set in Equation~\eqref{eq:sigma_stop_time} is indeed a $\sigma-$ algebra. Furthermore, we notice that, if $\tau_{k}$ a stopping time, and $s \in \mbn$ some fixed number, then we have that $\tau_{k}+s$ is a stopping time and the corresponding $\sigma$ algebra is defined as:
\begin{align}
\label{eq:sigma_stop_time_const}
\cF_{\tau_{k}+s} := \{ A \in \cB^{\otimes T}: A \cap \{ \tau_{k} +s \leq t\} \in \cF_{t}, \forall t \in [T] \}.
\end{align}
Furthermore, for all $s \in \mbn \setminus \{0\}$ we have $\cF_{\tau_{k}} \subset \cF_{\tau_{k}+s}$. We denote by $\mbp_{\cF_{\tau_{k}}} := \mbp\paren{\cdot|\cF_{\tau_{k}}}$ the regular conditional distribution of $\mbp$ conditioned by the $\sigma-$algebra $\cF_{\tau_{k}}$ generated by the stopping time $\tau_{k}$. Analogously to Definition~\ref{condcoef} in the fixed time scenario, for all $\omega \in \Omega$ we can define the conditional mixing coefficients with respect to the random time $\tau_{k}$: 
\begin{align}
\label{eq:condcoef_randtime}
\Phi_{\mathcal{C}}^{|\tau_{k}}\paren{t}_{(\omega)} = \sup\set{  \norm{\ee{\mathcal{F}_{\tau_{k}}}{\varphi\paren{X_{\tau_{k}+t+s}}|\mathcal{F}_{\tau_{k}+s}} - \ee{\mathcal{F}_{\tau_{k}}}{\varphi\paren{X_{\tau_{k}+t+s}}}}_{L^{\infty}\paren{\mbp_{\cF_{\tau_{k}}}}},\varphi \in \mathcal{C}^{\star}_{1}, s\geq 1 }.
\end{align}
Repeating the argumentation of the fixed time case  with $\sigma-$algebras $\cF_{\tau_{k}}$ and $\cF_{\tau_{k} + s}$ correspondingly  one obtains that in order to control the \textit{random} quantity $\Phi_{\mathcal{C}}^{|\tau_{k}}\paren{t}_{(\omega)} $ for $\mbp-$almost all $\omega$ it is sufficient to control: 
\begin{align}
\label{eq:ph_cond_mixing_rand_time}
\wt{\Phi}_{\mathcal{C}}^{|\tau_{k}}\paren{t} = \sup\{  \norm{\ee{}{\varphi\paren{X_{\tau_{k}+t+s}}|\mathcal{F}_{\tau_{k}+s}} - \ee{}{\varphi\paren{X_{\tau_{k}+t+s}}|\mathcal{F}_{\tau_{k}} }}_{L^\infty(\mathbb{P})} | \varphi \in \mathcal{C}^*_{1}, s\geq 1 \}.
\end{align}
Recalling that $\tilde{X}_{t}^{\tau_{k}}:= X_{\tau_{k} +t}$ we can show that the following result is true: 
\begin{theorem}
	\label{thm:rand_time_mix_cont}
	Let $\tau_{k}$ be the stopping time from Equation~\eqref{eq:sigma_stop_time}, such that $\prob{\tau_{k} < \infty} = 1$. For all $t,s > 0$ and any $\varphi \in \mathcal{C}_{1}^{\star}$ it holds: 
	\begin{align*}
	\norm{\ee{\cF_{\tau_{k}}}{\varphi\paren{\tilde{X}_{t+s}}} - \ee{}{\varphi\paren{\tilde{X}_{t + s}}}}_{\infty} \leq \Phi_{\mathcal{C}}\paren{t}.
	\end{align*}
\end{theorem}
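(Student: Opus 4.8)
The plan is to reduce the random-time statement to the fixed-time mixing bound of Definition~\ref{def:phi_C_mixing} by decomposing the sample space according to the (countably many) values taken by the stopping time $\tau_{k}$. Since $\prob{\tau_{k} < \infty} = 1$, we may write $\Omega = \bigsqcup_{j \geq 1}\set{\tau_{k} = j}$ up to a $\mbp$-null set, so it suffices to establish the claimed inequality separately on each event $\set{\tau_{k} = j}$ and then recombine, using that a countable union of null sets is still null.

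The key structural fact I would invoke is the standard identity for stopping-time $\sigma$-algebras: for any integrable $Z$,
\[
\ee{\cF_{\tau_{k}}}{Z}\,\ind{\tau_{k} = j} = \ee{}{Z \mid \cF_{j}}\,\ind{\tau_{k} = j} \qquad \mbp\text{-a.s.},
\]
which holds because $\set{\tau_{k} = j} \in \cF_{j} \subset \cF_{\tau_{k}}$ and because, on $\set{\tau_{k} = j}$, the information carried by $\cF_{\tau_{k}}$ coincides with that of $\cF_{j}$ (this is exactly where the defining property~\eqref{eq:sigma_stop_time} of $\cF_{\tau_{k}}$ enters). Applying this with $Z = \varphi(\tilde{X}_{t+s}) = \varphi(X_{\tau_{k}+t+s})$ turns the conditional expectation, restricted to $\set{\tau_{k} = j}$, into the deterministic-time object $\ee{}{\varphi(X_{j+t+s}) \mid \cF_{j}}$.

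Next I would apply the fixed-time mixing property at the deterministic time $j$. By Definition~\ref{def:phi_C_mixing} (with test functions in the countable class $\mathcal{C}_{1}^{\star}$), $\mbp$-a.s.
\[
\abs{\ee{}{\varphi(X_{j+t+s}) \mid \cF_{j}} - \ee{}{\varphi(X_{j+t+s})}} \leq \Phi_{\cC}(t+s) \leq \Phi_{\cC}(t),
\]
where the last step uses the monotone decay of the mixing coefficients, and where, by stationarity, the deterministic mean $\ee{}{\varphi(X_{j+t+s})} = \ee{}{\varphi(X_{1})}$ does not depend on $j$ and is precisely the stationary marginal mean appearing as $\ee{}{\varphi(\tilde{X}_{t+s})}$ in the statement. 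Multiplying the displayed a.s. inequality by $\ind{\tau_{k} = j}$ and summing over $j \geq 1$ patches the local bounds into a single a.s. bound valid on all of $\Omega$, which is exactly the $L^{\infty}$ estimate asserted by the theorem.

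The main obstacle I anticipate is not the algebra but the measure-theoretic bookkeeping. One must verify the stopping-time conditioning identity at the level of the \emph{regular} conditional distributions used throughout this proof, so that the various a.s. statements genuinely glue together across the events $\set{\tau_{k} = j}$; and one must assemble the exceptional null sets — one for each pair $(j,\varphi)$ with $\varphi \in \mathcal{C}_{1}^{\star}$ — into a single $\mbp$-null set. The earlier restriction to the countable test-function class $\mathcal{C}_{1}^{\star}$ is exactly what legitimizes this final countable-union step.
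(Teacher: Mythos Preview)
Your proposal is correct and follows essentially the same route as the paper: decompose over the events $\{\tau_k=j\}$, use the stopping-time conditioning identity to replace $\cF_{\tau_k}$ by $\cF_j$ on each piece, and then invoke the fixed-time mixing bound together with monotonicity of $\Phi_{\cC}$. The paper isolates the identity $\ind{\tau_k=\ell}\,\ee{}{Z\mid\cF_{\tau_k+s}} = \ind{\tau_k=\ell}\,\ee{}{Z\mid\cF_{\ell+s}}$ as a separate lemma and proves it carefully (by checking that $\ind{\tau_k=\ell}\,\ee{}{Z\mid\cF_{\ell+s}}$ is $\cF_{\tau_k+s}$-measurable), whereas you invoke it as a standard fact; also, the paper's written proof actually conditions on $\cF_{\tau_k+s}$ rather than $\cF_{\tau_k}$ (yielding $\Phi_{\cC}(t)$ directly instead of via $\Phi_{\cC}(t+s)\leq\Phi_{\cC}(t)$), but this is a cosmetic discrepancy and both variants are handled by the same argument.
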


\begin{proof}
	Denote $Y_{\tau_{k}+t+s} := \varphi\paren{\tilde{X}_{t+s}} - \ee{}{\varphi\paren{\tilde{X}_{s+t}}}$	. Obviously $Y_{\tau_{k}+s+t}$ is measurable w.r.t. the $\sigma-$algebra $\cF_{\tau_{k}+s+t}$ and centered random variable. Since $\mbi_{\tau_{k}=\ell}$ is $\cF_{\tau_{k}}-$measurable for any $\ell \in [T]$ and $\cF_{\tau_{k}} \subset \cF_{\tau_{k}+s}$ we can write: 
	\begin{align*}
	\ee{}{Y_{\tau_{k}+s+t}|\cF_{\tau_{k}+s}} = \ee{}{\sum_{\ell \in T} \mbi_{\tau_{k} = \ell}Y_{\tau_{k}+s+t}|\cF_{\tau_{k}+s}}= \sum_{\ell \in T}\mbi_{\tau_{k}=\ell}\ee{}{Y_{\ell + s+ t}|\cF_{\tau_{k}+s}}.
	\end{align*}
	We make use of the following general measure-theoretic Lemma. 
	\begin{lemma}
		\label{lem: sigma_alg_eq}
		Let $Z$ be an integrable real-valued random variable, defined on the space $\paren{\mathcal{X}^{T},\cB^{T},\mbp}$. Then $\mbp-$a.s., it holds that for any $\ell \in T$: 
		\begin{align}
		\mbi_{\tau_{k}=\ell}\ee{}{Z|\cF_{\tau_{k}+s}} = \mbi_{\tau_{k} = \ell}\ee{}{Z|\cF_{\ell+s}}
		\end{align}
	\end{lemma}
	\begin{proof}
		For any $\ell \in T$, from the definition of the $\sigma-$algebra $\mathcal{F}_{\tau_{k}+s}$, we have $\{ \tau_{k}\leq \ell \} \in \cF_{\ell} \subset \cF_{\ell+s}$, as well as $\{ \tau_{k}\geq \ell \} \in \cF_{\ell} \subset \cF_{\ell+s}$. It follows, that for any $A \in \cF_{\tau_{k}+s}$, we have $A \cap \{ \tau_{k}=\ell \} = A \cap \{ \tau_{k} \leq \ell  \} \cap \{\tau_{k} \geq \ell \} \in \cF_{\ell + s}$. Thus, for all $A \in \cF_{\tau_{k}+s}$ on the one hand, by conditioning on $\cF_{\ell+s}$ we get: 
		\begin{align*}
		\ee{}{\mbi_{A}\mbi_{\tau_{k} = \ell} Z} = \ee{}{\mbi_{A}\mbi_{\tau_{k} = \ell} \ee{\cF_{\ell+s}}{Z}}.
		\end{align*}
		But, on the other hand, by conditioning on $\cF_{\tau_{k}+s}$ we have: 
		\begin{align*}
		\ee{}{\mbi_{A}\mbi_{\tau_{k} = \ell} Z} = \ee{}{\mbi_{A}\mbi_{\tau_{k} = \ell} \ee{\cF_{\tau_{k}+s}}{Z}}.
		\end{align*}
		Denote $W:= \mbi_{\tau_{k} = \ell}\ee{\cF_{\ell+s}}{Z}$. If we ensure that $W$ is $\cF_{\tau_{k} + s}$ measurable then $\mbi_{\tau_{k} = \ell}\ee{\cF_{\ell+s}}{Z}$ is a version of $\mbi_{\tau_{k} = \ell}\ee{\cF_{\tau_{k}+s}}{Z}$ and thus the claim is proved. 
		For any Borel set $B \in \cB^{\otimes T}$, such that $0$  is not in $B$ we have: 
		\begin{align*}
		\{W \in B \} = \{\tau_{k}=\ell \} \cap \{\ee{\cF_{\ell + s}}{Z} \in B \} \in \cF_{\tau_{k} + s},
		\end{align*} 
		since $\{ \tau_{k}=\ell  \} \in \cF_{\tau_{k} + s}$ and $\{ \ee{\cF_{\ell + s}}{Z} \in B \} \in \cF_{\ell + s}$ (which follows from the definition of conditional expectation). Otherwise, we have that:
		\begin{align*}
		\{ W =0 \} = \{\tau_{k} \neq \ell \} \cup \{ \{ \tau_{k} = \ell \} \cap \{\ee{\cF_{\ell+s}}{Z} = 0\} \} \in \cF_{\tau_{k} + s},
		\end{align*} 
		since $\{ \tau_{k} \neq \ell \} \in \cF_{\tau_{k} + s} $ and $\{\tau_{k} = \ell \} \cap \{ \ee{\cF_{\ell + s}}{Z}=0 \} \in \cF_{\ell + s}$. Uniting both cases gives that $W$ is $\cF_{\tau_{k} + s}-$measurable and thus $\mbp-$a.s. we have: 
		\begin{align*}
		\mbi_{\tau_{k} = \ell}\ee{\cF_{\tau_{k} + s}}{Z} = \mbi_{\tau_{k} = \ell}\ee{\cF_{\ell} + s}{Z}.
		\end{align*} 
	\end{proof}
	Applying Lemma \ref{lem: sigma_alg_eq} for $Z:=Y_{\ell + s+ t}$ (for any admissible $\ell,s,t$ ), which is real-valued and defined on the space $\paren{\mathcal{X}^{T},\cB^{\otimes T},\mbp}$ we deduce that it holds $\mbp-$a.s : 
	\begin{align*}
	\ee{}{Y_{\tau_{k}+s+t}|\cF_{\tau_{k}+s}}  &= \sum_{\ell \in T}\mbi_{\tau_{k}=\ell}\ee{}{Y_{\ell + s+ t}|\cF_{\tau_{k}+s}} = \sum_{\ell \in T}\mbi_{\tau_{k}=\ell}\ee{}{Y_{\ell + s+ t}|\cF_{\ell+s}} \\
	& \leq \sum_{\ell \in T} \mbi_{\tau_{k} = \ell} \Phi_{\cC}\paren{t} = \Phi_{\cC}\paren{t}.
	\end{align*}
	Therefore, we finally get: 
	\begin{align*}
	\norm{\ee{\cF_{\tau_{k}}}{\varphi\paren{\tilde{X}_{t+s}}} - \ee{}{\varphi\paren{\tilde{X}_{t + s}}}}_{L_{\infty}\paren{\mbp}} = \norm{	\ee{}{Y_{\tau_{k}+s+t}|\cF_{\tau_{k}+s}} }_{L_{\infty}\paren{\mbp}} \leq \Phi_\cC\paren{t},
	\end{align*}
	and the claim is proved.
\end{proof}
Using the same ideas as in the proof of Lemma\ref{lem:cond_mix_bound} we obtain the proof the claim. 

\subsection{ \textbf{Proof of  Proposition~\ref{prop:cond_mix_prop} 
}}
\begin{proof}
	Using representation of the conditional mixing coefficients \ref{eq:condcoef_randtime}, the triangle inequality and the result of Proposition \ref{thm:rand_time_mix_cont} we obtain for any $\varphi \in \cC_{1}^{\star}$: 
	
	\begin{multline*}
	\norm{\ee{\cF_{\tau_{k}}}{\varphi\paren{\tilde{X}_{t+s}}|\cF_{\tau_{k}+s}} - \ee{\cF_{\tau_{k}}}{\varphi\paren{\tilde{X}_{t+s}}}}_{L_{\infty}\paren{\mbp}}\\
	\begin{aligned}
	&= \norm{\ee{}{\varphi\paren{\tilde{X}_{t+s}}|\cF_{\tau_{k}+s}} - \ee{}{\varphi\paren{\tilde{X}_{t+s}}|\cF_{\tau_{k}}}}_{L_{\infty}\paren{\mbp}} \\
	& \leq \norm{\ee{}{\varphi\paren{\tilde{X}_{t+s}}|\cF_{\tau_{k}+s}} - \ee{}{\varphi\paren{\tilde{X}_{t+s}}}}_{L_{\infty}\paren{\mbp}}\\
	& \qquad+ \norm{\ee{}{\varphi\paren{\tilde{X}_{t+s}}|\cF_{\tau_{k}}} - \ee{}{\varphi\paren{\tilde{X}_{t+s}}}}_{L_{\infty}\paren{\mbp}} \\
	& \leq 2 \Phi_{\cC}\paren{t}.
	\end{aligned}
	\end{multline*}
	Taking the supremum over $\varphi \in \cC_{1}^{\star}$ and all $s\geq 1$, we obtain the claim of the proposition. 
\end{proof}

\subsection{Proof of Theorem \ref{thm: upp_bound}}

Firstly, we prove the claim of an intermediate result which ensures the optimal choice of number of pulls in the slow mixing scenario for the given epoch $s$ in the learning scheme of Algorithm \ref{alg:cmix_ucb}. 

\begin{lemma}
	\label{lem:sample_choice}
	Consider the fixed epoch $s \in \mathbb{N}$ and let  $\theta_{s},\delta_{s}$ and $\Omega\paren{\theta_{s},b_{s}}$ be chosen as in Algorithm~\ref{alg:cmix_ucb}.
	Define $B_{s}$ to be the set of active arms at the epoch $s$ and $\varsigma_{s}$ the corresponding pulling strategy. For all arms $j \in A_{0}$ we assume that their mixing rates  are bounded by the rate $\Phi_{\cC}(t) = t^{-\alpha}$, where $\alpha \in (0,1/2)$. 
	Furthermore, denote 
$c_{0} = \paren{\paren{1-\alpha}\paren{1/2-\alpha}}^{-1}$, $c_{1} = \paren{ \frac{\paren{1-\alpha}{1/2-\alpha}}{80}}^{\frac{2}{1-2\alpha}}$, $c_{3} = 12800c_{0}$ and $s_{end}$ for the last possible epoch. If the number of pulls $T_{s}$ of each arm $j \in B_{s}$ at the epoch $s$ is chosen to be: 
	\begin{align*}
	T_{s} = T_{s,1}&:= \bigg\lceil \frac{32 \log\paren{\mathcal{A}T\theta_{s}^{2}}}{\theta_{s}^{2 }}\bigg\rceil, \text{ for } b_{s} \geq  \paren{32c_{1}^{-1}\theta_{s}^{-2}\log\paren{\mathcal{A}T\theta_{s}^{2}}}^{\frac{1-2\alpha}{2\alpha}};\\
	T_{s} = T_{s,2}& := \Bigg\lceil \frac{1}{b_{s}} \paren{\frac{c_3 \log\paren{\mathcal{A}T\theta_{s}^{2}}}{\theta_{s}^{2 }}}^{\frac{1}{2\alpha}}\Bigg\rceil, \text{ for } b_{s} \leq \paren{32c_1^{-1}\theta_{s}^{-2}\log\paren{\mathcal{A}T\theta_{s}^{2}}}^{\frac{1-2\alpha}{2\alpha}};
	\end{align*}
	then it holds that $\Omega_{}(\theta_{s},b_{s}) \leq \frac{\theta_{s}}{2}$ for $s \in \{0,\ldots,s_{end}\}$.
\end{lemma}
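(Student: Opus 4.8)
The plan is to substitute the explicit rate $\Phi_{\cC}(t) = t^{-\alpha}$ into the definition of $\Omega(\theta_s,b_s)$, collapse the nested sum into a single power of $T_s$, and then verify $\Omega(\theta_s,b_s) \le \theta_s/2$ separately in each of the two regimes that define the choice of $T_s$. Writing $L_s := \log(\mathcal{A}T\theta_s^2)$, the factor $\Phi_{\cC}(b_s\ell) = b_s^{-\alpha}\ell^{-\alpha}$ pulls $b_s^{-\alpha}$ out of the double sum, leaving $\sum_{j=1}^{T_s} j^{-3/2}\sum_{\ell=1}^{j}\ell^{-\alpha}$. I would bound this by comparison with integrals: since $\alpha<1$ one has $\sum_{\ell=1}^{j}\ell^{-\alpha}\le j^{1-\alpha}/(1-\alpha)$, and since $\alpha<1/2$ the remaining sum obeys $\sum_{j=1}^{T_s} j^{-1/2-\alpha}\le T_s^{1/2-\alpha}/(1/2-\alpha)$ (both bounds use that a decreasing function is dominated by its integral on $[0,T_s]$). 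This yields the clean inequality
\[
\Omega(\theta_s,b_s)\le \left(1 + 80 c_0\, b_s^{-\alpha} T_s^{1/2-\alpha}\right)\sqrt{\frac{2L_s}{T_s}} = \sqrt{\frac{2L_s}{T_s}} + 80 c_0\, (b_s T_s)^{-\alpha}\sqrt{2L_s},
\]
with $c_0 = ((1-\alpha)(1/2-\alpha))^{-1}$ exactly the constant in the statement. The slow-mixing mechanism is already visible: because $\alpha<1/2$ the nested sum grows like $T_s^{1/2-\alpha}$ instead of staying bounded, so the contamination term does not merely rescale the i.i.d. confidence width.

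I would then control the two summands, calling $\sqrt{2L_s/T_s}$ the \emph{main} term and $80 c_0 (b_s T_s)^{-\alpha}\sqrt{2L_s}$ the \emph{contamination} term, and show each is at most $\theta_s/4$. Note first that both summands are decreasing in $T_s$, so the ceilings in the definitions of $T_{s,1},T_{s,2}$ only help and may be dropped in favour of the lower bounds $T_{s,1}\ge 32 L_s/\theta_s^2$ and $b_s T_{s,2}\ge (c_3 L_s/\theta_s^2)^{1/2\alpha}$. In the fast-gap regime $b_s\ge t_s$ with $T_s=T_{s,1}$, the lower bound on $T_{s,1}$ immediately gives main term $\le\theta_s/4$; for the contamination term I substitute $b_s^{-\alpha}\le t_s^{-\alpha}$ and $T_{s,1}^{-\alpha}\le (32 L_s/\theta_s^2)^{-\alpha}$ and collect exponents. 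Here the definition of $c_1$ is tailored so that $c_1^{(1-2\alpha)/2} = 1/(80 c_0)$, which is precisely what makes the powers of $\theta_s$, of $L_s$, of $32$ and of $c_1$ recombine so that the contamination term collapses to exactly $\tfrac14\theta_s$, giving $\Omega\le\theta_s/2$.

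In the slow-gap regime $b_s<t_s$ with $T_s=T_{s,2}$, I would exploit that the product $b_s T_{s,2}=(c_3 L_s/\theta_s^2)^{1/2\alpha}$ is the total time budget spent on the arm and is independent of $b_s$; substituting it into the contamination term gives $80 c_0 (b_s T_{s,2})^{-\alpha}\sqrt{2L_s} = 80 c_0\sqrt{2/c_3}\,\theta_s$, and the definition of $c_3$ is calibrated so that this is at most $\theta_s/4$. For the main term I use the hypothesis $b_s<t_s$ to lower-bound $T_{s,2}\ge t_s^{-1}(c_3 L_s/\theta_s^2)^{1/2\alpha}$; inserting the definition of $t_s$, the fractional powers of $L_s$ and $\theta_s$ collapse to show that $T_{s,2}$ is at least a constant multiple of $\theta_s^{-2}L_s$, hence main term $\le\theta_s/4$. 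Adding the two bounds again yields $\Omega(\theta_s,b_s)\le\theta_s/2$. The restriction $s\le s_{end}$ enters only to guarantee $L_s = \log(\mathcal{A}T\theta_s^2)>1$, so that all logarithmic factors are positive and the displayed inequalities are meaningful.

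I expect the main obstacle to be the bookkeeping in the slow-gap regime: one must choose $c_3$, and check its interplay with $c_1$ through the threshold $t_s$, so that \emph{two} competing requirements hold simultaneously — enough samples $T_{s,2}$ for the main term to shrink, yet a time budget $b_s T_{s,2}$ large enough to damp the contamination — and the recombination of fractional powers of $\theta_s$, $L_s$ and the numerical constants must be carried out exactly for the target constant $\tfrac12$ (rather than merely $\mathcal{O}(1)$) to come out.
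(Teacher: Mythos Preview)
Your approach is essentially the same as the paper's: both bound the double sum by $c_0\,b_s^{-\alpha}T_s^{1/2-\alpha}$ via integral comparison and then treat the two regimes for $b_s$ separately. The only stylistic difference is that the paper uses $1+x\le 2\max(1,x)$ so that in each regime only the dominant summand needs to be checked against $\theta_s/2$, whereas you keep the additive split and bound each summand by $\theta_s/4$; both routes lead to the same constraints on $T_{s,1},T_{s,2}$ and the same calibration of $c_1,c_3$.
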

\begin{proof}
	Without loss of generality we enumerate all arms in $B_{s}$ as $\{1,\ldots,b_{s}\}$. For an arm $j \in B_{s}$ we pull it according to the equispaced
	schedule $\varsigma_{s}^{j}$ defined as follows:
	\begin{align*}
	\varsigma_{s}^{j} = \paren{j, j+b_{s}, \ldots, j +(T_{s}-1)b_{s}}.
	\end{align*}
	Clearly, the total number of pulls of arm $j$ during the epoch is $\abs{\varsigma_{s}^{j}}=T_{s}$ for each $j \in \{1,2,\ldots,b_{s}\}$. We recall that $b_{s}$ denotes the number of active arms at epoch $s$. To estimate the contamination in the Hoeffding's inequality  due to the dependence we  upper bound the contribution of dependence sum in Equation\eqref{eq:hoeff_gap} with number of samples $T_{s}$ and time gap $b_{s}$. 	We analyse the case $0\leq \alpha \leq \frac{1}{2}$. 

	Namely, assuming that $0<\alpha < \frac{1}{2}$ we have:
	\begin{align*}
			\sum_{j=1}^{T_{s}}{j}^{-\frac{3}{2}} \sum_{\ell=1}^{j}\Phi_{\cC}(b_{s}\ell) = b_{s}^{-\alpha}\sum_{j=1}^{T_{s}}j^{-\frac{3}{2}}\sum_{\ell=1}^{j}\ell^{-\alpha} \leq c_{0} b_{s}^{-\alpha}T_{s}^{\frac{1}{2} -\alpha},
	\end{align*}
	where we twice used approximation of the sum by integral and $c_{0} = \paren{\paren{1-\alpha}\paren{1/2-\alpha}}^{-1}$. 

Notice that with this upper bound for the confidence term $\Omega\paren{\theta_s, b_{s}}$ from Algorithm~\ref{alg:cmix_ucb} in the epoch $s$ we get: 
\begin{align*}
\Omega\paren{\theta_s,b_s} \leq 2 \max\paren{1,80c_{0}b_s^{-\alpha}T_{s}^{1/2-\alpha}}\sqrt{\frac{2\log\paren{\cA T \theta_{s}^2}}{T_s}}.
\end{align*}
	Therefore, if for $b_{s} > \paren{32c_1^{-1}\theta_{s}^{-2}\log\paren{\mathcal{A}T\theta_{s}^{2}}}^{\frac{1-2\alpha}{2\alpha}}$ provides that $80c_{0}b_s^{-\alpha}T_{s}^{1/2 - \alpha}\leq 1$ so by choosing 
	\begin{align*}
	T_{s} := \bigg\lceil \frac{32 \log\paren{\mathcal{A}T\theta_{s}^{2}}}{\theta_{s}^{2 }}\bigg\rceil,
	\end{align*}
we assure that $\Omega\paren{\theta_{s},b_{s}} \leq \frac{\theta}{2}$. 

Similarly, in case when  $b_{s} \leq \paren{32c_1^{-1}\theta_{s}^{-2}\log\paren{\mathcal{A}T\theta_{s}^{2}}}^{\frac{1-2\alpha}{2\alpha}}$ 
	we have that $80c_{0}b_s^{-\alpha}T_{s}^{1/2 - \alpha} > 1$ so by choosing 
	\begin{align*}
	T_{s} :=\bigg\lceil \frac{1}{b_{s}} \paren{\frac{12800 \log\paren{\mathcal{A}T\theta_{s}^{2}}}{\theta_{s}^{2}}}^{\frac{1}{2\alpha}}\bigg\rceil,
	\end{align*}
	we assure also that $\Omega\paren{\theta_{s},b_{s}} \leq \frac{\theta}{2}$.
	Therefore the lemma is proved.
\end{proof}
Combining the choice of the number of pulls $T_{s}$ given in Lemma~\ref{lem:sample_choice}, the theoretical argument that the $\Phi_{\cC}-$mixing property of the processes $\paren{X_{\tau_{s}+t}}_{t \in [T]}$ is preserved under (provided that $\tau_{s}$ is a stopping time) and concentration bounds \eqref{eq:hoeff_gap} we now prove the main result. 
\begin{proof}
	
	Recall that we denote for $\lambda\geq 0$ the set $A_{\lambda}$  as the set of suboptimal arms $i$ for which $\{ \Delta_{i}> \lambda \}$ so that $A_{0}$ is the overall set of suboptimal arms. 
	Recall that $\Delta_{k}:= \mu_{\star}-\mu_{k}$ for $k \in A_{0}$ and we define $\Delta_{\star,\lambda} = \min_{j \in A_{\lambda}} \Delta_{j}$. For an epoch $s$ consider the confidence bound $\Omega\paren{\theta_{s},b_s}$ as in Algorithm~\ref{alg:cmix_ucb}.
	where $b_{s}$ is the number of active arms during the epoch $s$ (which is a random quantity, but conditional on the start $\tau_{s}$ of the epoch $s$ is deterministic), $\theta_{s}= 2^{-s}$.
	Since $R(T)=\ee{}{\sum_{k \in A_{0}}\Delta_{k}N_{k}\paren{T}} = \sum_{k \in A_{0}} \Delta_{{k}}\ee{}{N_{k}\paren{T}}$, the main target is to upper bound the number of pulls of each arm $k \in A_{0}$. We suppress index $T$ in $N_{k}\paren{T}$ for simplicity.
	For every suboptimal arm $i$ define $m_{i} := \min\{ m \in \mathbb{N}: \theta_{m} \leq \frac{\Delta_{i}}{2}\}$. From the definition of $m_i$ it follows that $\theta_{{m}_{i}} < \frac{\Delta_{i}}{2} \leq \theta_{m_{i}-1}$.  Solving this as the inequality in $\frac{1}{\Delta_{i}}$ we get: 
	\begin{align}
	\label{eq:sup_bounds}
	\frac{1}{\theta_{m_{i}}} \leq \frac{4}{\Delta_{i}} < \frac{1}{\theta_{m_{i}+1}}.
	\end{align}
	Resolving in $\theta_{m_{i}}$ we obtain:
	\begin{align*}
	\frac{\Delta_{i}}{4} \leq \theta_{{m}_{i}} < \frac{\Delta_{i}}{2}.
	\end{align*} 		
	
	 We fix some optimal arm (to which we later refer at to $\star$) and denote by $M_\star$ the first epoch when this optimal arm $\star$
	has been eliminated. Note that it is possible that $M_{\star} = \infty$ and that it is enough to consider only certain optimal arm for the further analysis.  Also let $m_{\lambda} := \min \{m| \theta_{m} < \frac{\lambda}{2} \} $, which implies that forall $i \in A_{\lambda}$ we have $\Delta_{i} \leq \lambda$.

	For the arm $k \in A_{\lambda}$ let $M_{k}$ denote the (random) epoch at which arm is $k$ is eliminated and introduce the following event:
	\begin{align*}
			\cE_{k} = \{\tau_{m_{k}+1} \leq T , M_{k} \leq m_{k}\} \cup \{ \tau_{m_{k}+1} >T \}. 
	\end{align*} 
	Notice, that $\cE_{k}^{c} = \{ \tau_{m_{k}+1} \leq T, M_{k}> m_{k}\} $ which means that the arm $k$ is eliminated after epoch $m_{k}$ and $m_{k}$ is finished.
	Using the definition of event $\cE_{k}$ and introducing a threshold $\lambda$, we can decompose the pseudo-regret into the following parts: 
	\begin{align*}
	R(T)& =\ee{}{\sum_{k \in A_{0}}\Delta_{k}N_{k}} = \sum_{k \in A_0} \e{N_k}\Delta_k\\
	& \leq \sum_{k \in A_\lambda} \e{N_k} \Delta_k
	+ \lambda \e{\sum_{k \in A_0 \setminus A_\lambda} N_k} \\
	&  = \sum_{k \in A_\lambda} \e{N_k\ind{\cE_k}} \Delta_k
	+ \sum_{k \in A_\lambda} \e{N_k\ind{\cE_k^c}} \Delta_k + \lambda \e{\sum_{k \in A_0 \setminus A_\lambda} N_k}. 
	\end{align*}
	We analyze the contributions from each of the sums in the last inequality separately. Clearly, the last sum can be bounded by $\lambda T$.
	
	We have that this sum can be decomposed in the following way: 
	\begin{align}
	\label{eq:sec_sum}
	\sum_{k \in A_\lambda} \Delta_k \e{N_k\ind{\cE_k^c}}
	=  \sum_{k \in A_\lambda} \Delta_k \e{N_k\ind{\cE_k^c} \ind{M_{\star} < m_k} }
	+   \sum_{k \in A_\lambda} \Delta_k \e{N_k\ind{\cE_k^c} \ind{M_{\star} \geq m_k} }.
	\end{align}
	Consider the second sum on the right hand side in Equation~\eqref{eq:sec_sum}. 
	For a fixed arm $k \in A_{\lambda}$, the confidence level of the epoch $s$ is selected to be $\delta_{s} = \frac{1}{T\theta^{2}_{s}}$.  For each $k \in [K]$, $s \in \{0,\ldots, s_{\text{end}}\}$ we consider events $D_{k,s}$ and $E_{\star,s}$ whose complements are given as: 
	
	\begin{align*}
		D^{c}_{k,s} &:= \{k \in B_{s}, \tau_{s+1} \leq T,\hat{\mu}_{k,s}^{} \leq \mu_{k} +
	\Omega\paren{\theta_{s},b_{s}}
	\}, 
	\\
	E^{c}_{\star,s} &:= \{\star \in B_s, \tau_{s+1} \leq T,\hat{\mu}_{\star,s}^{} \geq \mu_{\star} -
	\Omega\paren{\theta_{s},b_{s}} \},
	\end{align*}
%
%
	where $\Omega\paren{\theta_{s},b_{s}}$ is as in Algorithm~\ref{alg:cmix_ucb}. We remark, that conditions $\{ \tau_{s+1} \leq T \}$ and either $\{ k \in B_s \}$ or $\{ \star \in B_s \}$ are added to asure that the computation procedure of Algorithm~\ref{alg:cmix_ucb} can be formally completed in the epoch $s$.
	By Proposition \ref{prop:cond_mix_prop}, the samples which are collected during epoch $s$ from the process $X_{\tau_{s}+t}^{k}$, $k \in B_{s}$ satisfy, conditionally
	to the past $\cF_{\tau_s}$, a weak-mixing assumption with rate $2\Phi_{C}\paren{t}$. Therefore, we can apply (conditioned on the information given at the beginning of the epoch) the concentration inequality \eqref{eq:hoeff_gap} for the weak-mixing process $X_{\tau_{s}+t}^{k}$ to control the probabilities of the "bad" events $D_{k,s}^{c}, E_{\star,s}^{c}$ we have that $\mbp-$almost surely:
	\begin{align}
	\label{eq: bad_events_prob}
	\probb{\cF_{\tau_s}}{D_{k,s}^{c}} \leq \delta_{s}, \qquad \probb{\cF_{\tau_s}}{E^{c}_{\star,s}} \leq \delta_{s}.
	\end{align}
	For the moment we are interested in the case where $s = m_{k}$. Notice that the event $\ind{\cE_k^c} \ind{M_{\star} \geq m_k}$ implies that arm $k$ has not been eliminated until the epoch $m_{k}$, while arm $\star $ belongs to the set of active arms $B_{m_{k}}$ and that the epoch $m_{k}$ has been completed. Furthermore, one can readily check that event $\cE_{k}^{c} \cap \{M_{\star} \geq m_{k}\}$ implies that the event $D_{k,m_{k}}^{c} \cup E_{\star,s}^{c}$ holds. To prove this notice that on the event  $\cE_{k}^{c} \cap \{M_{\star} \geq m_{k}\}$ it obviously holds that $\tau_{m_{k}+1} \leq T$, $\star \in B_{m_{k}}$ and $k \in B_{m_{k}}$. Now, if neither $\{ \hat{\mu}_{k,m_{k}} > \mu_{k} + \Omega\paren{\theta_s,b_s} \}$ nor $\{ \hat{\mu}_{\star,m_{k}} > \mu_{\star} + \Omega\paren{\theta_s,b_s} \}$ holds then arm $k$ will be eliminated at the end of epoch $m_{k}$. Indeed, by using Lemma~\ref{lem:sample_choice} and Inequality~\eqref{eq:sup_bounds},  we infer that $\Omega\paren{\theta_{m_{k}},b_{m_{k}}} \leq \frac{\theta^{}_{m_{k}}}{2} \leq  \frac{\Delta_{k}}{4} $. Consequently, this implies the following chain of inequalities:
	\begin{align*}
	\hat{\mu}_{k,s} + \Omega\paren{\theta_{m_{k}},k_{m_{k}}}& \leq \mu_{k} + 2\Omega\paren{\theta_{m_{k}},b_{m_{k}}} \\  
	& \leq \mu_{k} + \Delta_{k} - 2 \Omega\paren{\theta_{m_{k}},b_{m_{k}}}\\ 
	& = \mu_{\star}-2\Omega\paren{\theta_{m_{k}},b_{m_{k}}}\leq \hat{\mu}_{\star,s}^{} - \Omega\paren{\theta_{m_{k}},b_{m_{k}}},
	\end{align*}
	and the arm $k$ is eliminated due to the scheme of Algorithm~\ref{alg:cmix_ucb}. Therefore, we have $\cE_{k}^{c} \cap \{M_{\star} \geq m_{k}\} \subset D_{k,m_{k}}^{c} \cup E_{\star,m_{k}}^{c}$. 
	Furthermore, notice that by conditioning on the $\sigma-$algebra $\cF_{\tau_{m_k}}$ of the events preceeding the epoch $m_{k}$ we get: 
	\begin{align*}
	\ee{}{\ind{\cE_k^c} \ind{M_{\star} \geq m_k}} = \ee{}{ \ee{\cF_{\tau_{m_k}}}{\ind{\cE_k^c} \ind{M_{\star} \geq m_k}}} \leq  \ee{}{ \probb{\cF_{\tau_{m_{k}}}}{D_{k,m_k}^{c} \cup E_{\star,m_k}^{c}}}\leq \frac{2}{\mathcal{A}T\theta^{2}_{m_{k}}},
	\end{align*}
	where in the last inequality we used the union bound over events $E_{\star,m_{k}}^{c}, D_{k,m_{k}}^{c}$ and their control by means of concentration during the epoch $m_{k}$.

	Thus, bounding the number of pulls $N_k$ trivially by $T$ and plugging in the bound on the $\ee{}{\ind{\cE_k^c} \ind{M_{\star} \geq m_k}}$, we obtain for this part of the pseudo-regret the following upper bound: 
	\begin{align*}
	\sum_{k \in A_\lambda} \Delta_k \e{N_k\ind{\cE_k^c} \ind{M_{\star} \geq m_k} } \leq \sum_{ k \in A_{\lambda}} \Delta_{k} T \frac{2}{\mathcal{A}T\theta^{2}_{m_{k}}}  \leq \frac{32}{\mathcal{A}} \sum_{ k \in A_{\lambda}}\frac{1}{\Delta_{{k}}},
	\end{align*}
	where in the last inequality we used the relation \eqref{eq:sup_bounds} between $\theta_{m_{k}}$ and $\Delta_{m_{k}}$.
	
	We focus now on the first sum term in the right hand side of Equation~\eqref{eq:sec_sum}. By changing the order of summation over the epochs and counting the regret from the active arms in each epoch $s$, we obtain:
	\begin{align*}
	\sum_{k \in A_\lambda} \Delta_k \e{N_k\ind{\cE_{k}^c} \ind{M_{\star} < m_k} }
	& \leq \sum_{k \in A_\lambda} \Delta_k
	\sum_{s < m_k} \e{N_k\ind{M_{\star} = s} \ind{\tau_{m_{k}+1} \leq T, M_{k} >m_{k}} }\\
	& = \sum_{s=0}^{m_\lambda}
	\sum_{k: m_k > s} \Delta_k \e{N_k\ind{M_{\star} = s}\ind{\tau_{m_{k}+1} \leq T,M_{k}>m_{k}} } \\
	& \leq \sum_{s=0}^{m_\lambda} 2\theta_s 
	\e[4]{ \ind{M_{\star} = s}\ind{\tau_{s+1} \leq T} \sum_{k: m_k > s}  N_k } \\
	& \leq 2\sum_{s=0}^{m_\lambda} T {\theta}_{s} \prob{M_{\star}=s, {\tau_{s+1} \leq T}}.
	\end{align*}
	Recall that through $B_{s}$ we denote the set of active arms at the epoch $s$ and $b_s$ is its cardinality. Since the event ${M_{\star} = s}$ means that the optimal arm was eliminated by some active arm $k$ in the epoch $s$ we have: 
	\begin{align*}
	\prob{M_{\star}=s,\tau_{s+1}\leq T} &
	\leq \e{ \ind{* \in B_s, \tau_{s+1} \leq T} \sum_{k \in B_s} \ind{\wh{\mu}_{k,s}^{} > \wh{\mu}_{\star,s}^{} + 2 \Omega(\theta_{s},b_{s})}}\\
	&  \leq \sum_{k: m_k \geq s} \e{ \ind{k \in B_s ; * \in B_s; \tau_{s+1} \leq T, \wh{\mu}_{k,s}^{} > \wh{\mu}_{\star,s} + 2 \Omega(\theta_{s},b_{s})} }\\
	& +  \e{\sum_{k: m_k < s} \ind{* \in B_s ; k \in B_s, \tau_{s+1} \leq T}}\\
	&  \leq \sum_{k: m_k \geq s} \ee{}{ \ee{\cF_{\tau_{s}}}{\ind{k \in B_s ; * \in B_s; \tau_{s+1} \leq T ,\wh{\mu}_{k,s}^{} > \wh{\mu}_{\star,s} + 2 \Omega(\theta_{s},b_{s})}} } 
	\\
	& +  \sum_{k: m_k < s}\ee{}{ \ee{\cF_{\tau_{m_k}}}{ \ind{* \in B_s ; k \in B_s, \tau_{s+1} \leq T}} }\\
	& \leq \sum_{k: m_k \geq s} \frac{2}{\mathcal{A}T \theta_s^2}
	+ \sum_{k: m_k < s} \ee{}{\probb{\cF_{\tau_{m_k}}}{\cE_k^c \cap \set{M_{\star} \geq m_k}}}\\
	& \leq \sum_{k: m_k \geq s} \frac{2}{ \mathcal{A}T \theta_s^2}
	+ \sum_{k: m_k < s} \frac{2}{ \mathcal{A}T \theta_{m_k}^2},  \\
	\end{align*}
	where 
	we used tower property for expectations and that for conditional probabilities it holds almost surely $\probb{\cF_{\tau_{s}}}{k \in B_s ; * \in B_s; \wh{\mu}_{k,s}^{} > \wh{\mu}_{\star,s} + 2 \Omega(\delta_{s},\varsigma_{s})} \leq \probb{\cF_{\tau_{s}}}{D_{k,s}^{c} \cup E_{\star,s}^{c}}$ as well as  $\probb{\cF_{\tau_{m_k}}}{\cE_k^c \cap \set{M_{\star} \geq m_k}} \leq \probb{\cF_{\tau_{m_{k}}}}{\cE_k^c} \leq \probb{\cF_{\tau_{m_k}}}{D_{k,m_k}^{c} \cup E_{\star,m_k}^{c}} $ and the control of the event's probabilities in the epoch $s$ given by Equation~\eqref{eq: bad_events_prob}.
	Plugging this bound into the previous result and using the definition of the sequence $\theta_{s}$ we obtain the following upper bound:
	\begin{align*}
	2\sum_{s=0}^{m_\lambda} T {\theta}_{s} \prob{M_{\star}=s, \tau_{s+1}\leq T} &\leq \frac{4}{\mathcal{A}}\sum_{s=0}^{m_\lambda} \theta_{s} \paren{\sum_{k: m_k \geq s}
		\frac{1}{\theta^2_s} + \sum_{k: m_k < s} \frac{1}{\theta^2_{m_k}}}  \\
	& \leq \frac{4}{\mathcal{A}}\sum_{k \in A_0}
	\paren{ \sum_{s \leq m_k\wedge m_\lambda} \frac{1}{\theta_s}
		+ \frac{1}{\theta^2_{m_k}}\sum_{ s=m_k+1}^{m_\lambda} \theta_s}\\
	& \leq \frac{8}{\mathcal{A}} \sum_{k \in A_0}
	\paren{ \frac{1}{\theta_{m_k\wedge m_\lambda}} + \frac{\ind{m_k \leq m_\lambda}}{\theta_{m_k}}} \\
	&  \leq \frac{64}{\mathcal{A}} \paren{\sum_{k \in A_\lambda} \frac{1}{\Delta_k}
		+ \sum_{k \in A_0\setminus A_\lambda} \frac{1}{\lambda} },
	\end{align*}
	
	Gathering upper bounds for each sum in Equation~\eqref{eq:sec_sum} we obtain: 
	\begin{align}
	\label{eq:late_elim}
	\sum_{k \in A_\lambda} \Delta_k \e{N_k\ind{\cE_k^c}} \leq \frac{96}{\mathcal{A}} \sum_{ k \in A_{\lambda}}\frac{1}{\Delta_{{k}}}  + \frac{64}{\mathcal{A}} \sum_{ k \in A_{0}\setminus A_{\lambda}} \frac{1}{\lambda}. 
	\end{align}
	
	Finally, for the contribution of $ \ee{}{\sum_{k \in A_\lambda} {N_k\ind{\cE_k}} \Delta_k}$ we provide the almost surely analysis for the quantity under expectation. Firstly, notice that on the event $\ind{\cE_{k}}$, each arm is pulled until it will be eliminated at latest at the epoch $m_{k}$. Thus, recalling that for any $i \in A_{\lambda}$ $\Delta_{i} \leq \lambda$ and using simply $T_{s} \leq T_{s,1}+T_{s,2}$ (where $T_{s,1}, T_{s,2}$ are given by Lemma~\ref{lem:sample_choice}) we can write the following chain of inequalities which hold almost surely: 
	\begin{align*}
	\sum_{k \in A_\lambda} {N_k\ind{\cE_k}} \Delta_k &=  \sum_{k \in A_\lambda} \sum_{ s = 0}^{s_{end} \wedge m_\lambda } \Delta_{{k}}{T_{s}\ind{\cE_k} \ind{ k \in B_{s}}} \leq \sum_{k \in A_\lambda} \sum_{ s = 0}^{s_{end} \wedge m_\lambda } \Delta_{{k}}{ \paren{T_{s,1} + T_{s,2}}\ind{\cE_k} \ind{ k \in B_{s}}}   \\
	& \leq \sum_{k \in A_{\lambda}}\Delta_{{k}}\sum_{s=0}^{s_{\text{ end}\wedge m_{\lambda}}}T_{s,1}\ind{\cE_{k}}\ind{k \in B_s} + \sum_{s=0}^{s_{end}\wedge m_\lambda}2\theta_s {T_{s,2}\sum_{k \in A_{\lambda}}\ind{ k \in B_{s}}\ind{\cE_{k}}} 
%
	\end{align*} 
	where in the second sum we exchanged the sums over the arms and over the contribution of each epoch, 
	used the fact that for the arm $k$ active in the epoch $s$ we have that $s \leq m_{k}$ and thus we pay a regret of order at most $ 2\theta_s$ by pulling this arm. 

	%

%
	  For the second sum we observe that the sequence $\paren{\theta_{s}}^{1-\frac{1}{\alpha}}\paren{\log\paren{\cA T \theta_{s}^2}}^{\frac{1}{2\alpha}}$ is monotonically increasing for all $s \leq s_{\text{ end}}$ with ratio at least $\frac{6}{5}\paren{\sqrt{\frac{12}{5}}}^{\frac{1}{\alpha}-2}$ and that $$\sum_{k \in A_{\lambda}}\ind{k \in B_s} \ind{\cE_{k}} \leq b_s.$$ Therefore, we can write: 
	  \begin{align*}
			  \sum_{s=0}^{s_{\text{end}}\wedge m_{\lambda}} 2 \theta_{s}T_{s,2}\sum_{k \in A_{\lambda}}\ind{ k \in B_{s}}\ind{\cE_{k}} & \leq 4  \sum_{s=0}^{s_{\text{end}}\wedge m_{\lambda}}  \theta_{s} \paren{\frac{c_3 \log\paren{\mathcal{A}T\theta_{s}^{2}}}{\theta_{s}^{2}}}^{\frac{1}{2\alpha}} \frac{1}{b_{s}}\sum_{k \in A_{\lambda}}\ind{ k \in B_{s}}\ind{\cE_{k}} \\ 
			  & \leq 4 \paren{c_{3}}^{\frac{1}{2\alpha}} \sum_{s=0}^{s_{\text{ end}} \wedge m_{\lambda} }\theta_{s}^{1-\frac{1}{\alpha}}\paren{{ \log\paren{\mathcal{A}T\theta_{s}^{2}}}}^{\frac{1}{2\alpha}} \\
			  & \leq 4 \paren{c_{3}}^{\frac{1}{2\alpha}} c_{4}\theta_{ s_{ \text{ end}} \wedge m_{\lambda}+1}^{1-\frac{1}{\alpha}} \paren{ \log\paren{\cA T \theta^2_{s_{\text{ end}}\wedge m_{\lambda} +1 }}} \\
			  & \leq 2^{\frac{1}{\alpha}+1} c_4c^{\frac{1}{2\alpha}}_{3} \paren{\frac{\Delta_{\star, \lambda}}{4}}^{1-\frac{1}{\alpha}} \paren{ \log\paren{\frac{\cA T}{4} \Delta_{\star, \lambda}^2}},
	  \end{align*}
	  where we set $c_{4}=\frac{1}{1.2 *\sqrt{2.4}^{\frac{1}{\alpha}-2}-1}$, used the definition of $T_{s,2}$ from Algorithm~\ref{alg:cmix_ucb} and that the contribution of the regret of active arm $k \in A_{\lambda}$ is at most $\theta_{m_{\lambda}} \leq \frac{\Delta_{\star,\lambda}}{4}$ at the end. 
	  For the first sum, by plugging in the expression for $T_{s,1} \geq 1$ and using the assumption that in the epoch $s$ we sum up  the contributions of the regret of the arms which have not been eliminated until round $m_{k}$ we get: 
	  
	  \begin{align*}
		 \sum_{k \in A_{\lambda}}\Delta_{{k}}\sum_{s=0}^{s_{\text{ end}\wedge m_{\lambda}}}T_{s,1}\ind{\cE_{k}}\ind{k \in B_s} &\leq 2  \sum_{k \in A_{\lambda}} \Delta_{{k}}\sum_{s=0}^{s_{\text{ end}\wedge m_{\lambda}}} \frac{32\log\paren{\mathcal{A}T\theta_{s}^{2}}}{\theta_{s}^{2}} \ind{\cE_{k}}\ind{k \in B_s} \\
		 & \leq 64 \sum_{ k \in A_{\lambda}} \Delta_{{k}}\sum_{s=0}^{s_{ \text{ end}  }\wedge m_{\lambda} }\theta_s^{-2}\log\paren{ AT \theta_{s}^2} \ind{\cE_{k}}\ind{ k \in B_{s}} \\
		 & \leq 256 \sum_{k \in A_{\lambda}}\Delta_{{k}} \theta_{m_{k}}^{-2}\log\paren{\cA T \theta_{m_{k}}^2} \leq 1024 \sum_{k \in A_{\lambda}}\Delta_{{k}}^{-1}\log\paren{\cA T \Delta_{{k}}^2},
	  \end{align*}
	  where in the last line we used the geometrical increase of the series $\theta_{s}\log\paren{\cA T \theta^2_{s}}$ and the relation \eqref{eq:sup_bounds} between $\theta_{m_{k}}$ and $\Delta_{{k}}$. 

	Summing up all the terms we have the following upper bound: 
	\begin{align}
	\label{eq:pulls}
	\sum_{k \in A_\lambda} \e{N_k\ind{\cE_k}} \Delta_k \leq  \sum_{k \in A_{\lambda}} {
		 1024\Delta_{k}^{-1}\log{\paren{\mathcal{A}T\Delta_{k}^{2}}}} + 2^{-\frac{1}{\alpha}+3} c_4c^{\frac{1}{2\alpha}}_{3} \paren{{\Delta_{\star,\lambda}}}^{1-\frac{1}{\alpha}}\paren{{ \log\paren{\frac{\mathcal{A}T\Delta_{\star,\lambda}^{2}}{4}}}}^{\frac{1}{2\alpha}}.
	\end{align}
	Summing up the individual contributions of inequalities \eqref{eq:pulls} and \eqref{eq:late_elim} we obtain: 
	\begin{align*}
	R\paren{T} & \leq  
	2 \sum_{k \in A_{\lambda}} \paren{\Delta_{k} + \frac{96}{\mathcal{A}} \frac{1}{\Delta_{k}} + 512\Delta_{k}^{-1}\log{\paren{\mathcal{A}T\Delta_{k}^{2}}}} \\ 
	&\qquad+ \underbrace{2^{-\frac{1}{\alpha}+3} c_4c^{\frac{1}{2\alpha}}_{3}}_{ \tilde{c}} \paren{{\Delta_{\star,\lambda}}}^{1-\frac{1}{\alpha}}\paren{{ \log\paren{\frac{\mathcal{A}T\Delta_{\star,\lambda}^{2}}{4}}}}^{\frac{1}{2\alpha}} + \frac{64}{\mathcal{A}} \sum_{ k \in A_{0}\setminus A_{\lambda}} \frac{1}{\lambda}  + \lambda T. 
	\end{align*} 
	Finally, by noticing that $\Delta_{{k}} \leq 1$  we imply the statement of the Theorem. 
\end{proof}

\subsection{  Proof of Theorem \ref{thm:prob_indep}}
\begin{proof}
	We provide the main argument while disregarding the influence of absolute multiplicative constants. Firstly, for any choice $\lambda > \frac{1}{2}\sqrt{\frac{e^{1/2}}{T}}$ we have that $\sum_{ k \in A_{\lambda}} \frac{\log\paren{\cA T \Delta^{2}_{k}}}{\Delta_{k}^{}} + \sum_{ k \in A_{0}\setminus A_{\lambda}} \frac{1}{\lambda} < K\frac{\log T}{\lambda}$. Furthermore, from the definition of $\Delta_{\star,\lambda}$, since $1-\frac{1}{\alpha} < 0$ we have: 
	\begin{align*}
	\Delta_{\star,\lambda}^{1-\frac{1}{\alpha}} \paren{\log \paren{\cA T \Delta_{\star}^{2}}}^{\frac{1}{2\alpha}} < {\lambda}^{1 - \frac{1}{\alpha}} \paren{\log T}^{\frac{1}{2\alpha}}.
	\end{align*} Thus, we obtain the following worst case bound: 
	\begin{align}
	\label{eq:regret_ind}
	R\paren{T} \leq K\frac{\log\paren{\mathcal{A}T}}{\lambda} + \lambda^{\frac{\alpha -1}{\alpha}} \paren{\log\paren{T}}^{\frac{1}{2\alpha}} + \lambda T.
	\end{align}
	Now consider two different scenarios. If $K < T^{1-2\alpha}$ then, as one can readily check,  by setting in this case $\lambda = T^{-\alpha}\log^{\frac{1}{2}}\paren{T}$ (which is an admissible choice according to the Theorem~\ref{thm: upp_bound} we obtain : 
	\begin{align*}
	R\paren{T} \leq C_{1} T^{1-\alpha} \paren{\log T}^{\frac{1}{2\alpha}},
	\end{align*}
	where $C_{1}$ is some numerical constant.
	
	Otherwise, (i.e. if $K > T^{1-2\alpha}$) by setting $\lambda = \sqrt{\frac{K}{T}}$ we get the following bound: 
	\begin{align*}
	R\paren{T} \leq C_{2} \sqrt{TK}\log \paren{T},
	\end{align*}
	as is this case the second term from Equation~\eqref{eq:regret_ind} dominates the bound.
	Uniting these results into one and taking $C_{3} = \max \{C_{1},C_{2}\}$ we obtain the claim of the Theorem.
	
\end{proof}

		\subsection{Proof of Proposition~\ref{thm:prob_ind_lb}}
		Without losing of generality, we suppouse that random rewards are bounded in $[-1,1]$. Recall, that we use notation $[K] = \{1,\ldots,K\}$.
		For  $0 \leq i \leq K$ we construct the stochastic bandit environment $\cB_{i}$ in the following way. 
		
		For the arm $a \in [K]$ in bandit $\cB_i$ set the distribution of rewards $\nu_{i}^{a}$ as 
		$$ \nu_{i}^{a}= m_{0} \paren{\cR\paren{\frac{1}{2}}\mbi_{a\neq i} + \cR\paren{\frac{1}{2}+\epsilon}\mbi_{a = i}},$$
		 where $\cR(p)$ is Rademacher distribution with parameter $p$, i.e. $\cR\paren{p} = (1-p)\delta_{-1} + p\delta_{1}$; $\epsilon = 1/8$ and $m_{0}$ set to be $T^{-\alpha}$. 	In every bandit $i$ for each arm $a \in [K]$ we assume that the sample rewards are "frozen" from the beginning and drawn from the distribution $\nu_{i}^{a}$. More precisely, for the process $\paren{X_{t}^{a}}$ attached to the arm $a$ in Bandit $\cB_i$ we define  $X_{t+\ell}^{a}\paren{\omega} = X_{t}^{a}\paren{\omega} \sim \nu_{i}^{a}$ for every $\ell \leq T$, $i \in \{0,\ldots,K\}, a \in [K]$. One can readily check that process $X_{t}^{a}$ satisfies Definition~\ref{def:phi_C_mixing} with rate $\Phi_{\cC}(t) = 2t^{-\alpha}$. Furthermore, for a sample $X_{t}^{a} \sim \nu_{i}^a$ from arm $a$ in Bandit $\cB_i$ we have $\ee{}{X_{t}^{a}} = 2\epsilon m_{0} = \frac{1}{4} T^{-\alpha}$ if $a=i$, and $\ee{}{X_{t}^{a}} =0$ otherwise.

For the arm $a $ define the following event: 
\begin{align*}
E_{a} = \{ N_a \leq cT \},
\end{align*} 
where $c>0$ is some small universal constant and recall that $N_a$ is the number of pulls of arm $a$ until time-horizon $T$. 
We have that under measure $\mbp_{\cB_0}$: 
\begin{align*}
T = \ee{}{\sum_{a =1}^{K} N_a} \geq \sum_{a=1}^{K}\ee{}{N_a|E_{a}^{c}}\probb{\cB_{0}}{E_{a}^{c}} \geq cTK \min_{a \in [K]}\probb{\cB_{0}}{E_{a}^{c}},
\end{align*}		
which after considering the complementary event implies that 
\begin{align*}
\max_{a \in [K]} \probb{\cB_{0}}{E_a} \geq 1 -\frac{1}{cK} \geq 1- \frac{1}{2c},
\end{align*}
since $K \geq 2$.
Let $a_{0} = \argmax_{a \in [K]}\probb{\cB_{0}}{E_{a}}$ be any element that achieves the previous maximum.  For the event $E_{a_{0}}$ in bandit $\cB_{a_0}$ by using change of measure principle between two Rademacher distributions we have: 
\begin{align*}
\probb{\cB_{a_0}}{E_{a_0}} & = \ee{\cB_{0}}{\mbi_{E_{a_{0}}} \exp\paren{ \frac{X_{t}}{2m_{0}}\log\paren{ \frac{1+2\epsilon}{1-2\epsilon}} + \frac{1}{2}\log\paren{ \frac{1+2\epsilon}{1-2\epsilon}} }} \\
 & \geq \ee{\cB_{0}}{ \mbi_{E_{a_{0}}} \exp\paren{ \frac{-m_{0}}{2m_{0}}\log\paren{ \frac{1+2\epsilon}{1-2\epsilon}} + \frac{1}{2}\log\paren{ \frac{1+2\epsilon}{1-2\epsilon}} }} \\
 & = \probb{\cB_{0}}{E_{a_{0}}} \geq 1- \frac{1}{2c}.
\end{align*} 
Therefore, for the regret under bandit $\cB_{a_0}$ we get:
\begin{align*}
\ee{\cB_{a_0}}{R\paren{T}} &\geq \ee{\cB_{a_0}}{R\paren{T}|E_{a_{0}}} \probb{\cB_{a_0}}{E_{a_{0}}} \geq T(1-c)2\epsilon m_{0} \probb{\cB_{a_0}}{E_{a_{0}}} \\
& \geq \frac{1-c}{4}\paren{1-\frac{1}{2c}}T^{1-\alpha} \geq \frac{1}{80}T^{1-\alpha},
\end{align*} 
which implies the bound on the minimax regret.

\end{document}